\theoremstyle{plain}
\newtheorem{myTheo}{Theorem}
\newtheorem{myLemma}{Lemma}
\newtheorem{myRemark}{Remark}
\newtheorem{myCorollary}{Corollary}
\newtheorem{myDefinition}{Definition}
\begin{document}

\title{Enhanced Principal Component Analysis under A Collaborative-Robust Framework}
\author{Rui Zhang, \IEEEmembership{~Member~IEEE}, Hongyuan Zhang, and Xuelong Li$^*$, \IEEEmembership{~Fellow~IEEE}
\thanks{$*$ Corresponding author.}

\thanks{Rui Zhang, Hongyuan Zhang, and Xuelong Li are with School of Computer Science
and School of Artificial Intelligence, Optics and Electronics (iOPEN), Northwestern Polytechnical University, Xi'an 710072, Shaanxi, P. R. China.}

\thanks{E-mail: \{ruizhang8633, hyzhang98\}@gmail.com, xuelong\_li@nwpu.edu.cn.}

}

\maketitle


\begin{abstract}
Principal component analysis (PCA) frequently suffers from the disturbance 
of outliers and thus a spectrum of robust extensions and variations of PCA 
have been developed. However, existing extensions of PCA treat all 
samples equally even those with large noise. 
In this paper, we first introduce a general collaborative-robust weight learning 
framework that combines weight learning and robust loss in a non-trivial way. 
More significantly, under the proposed framework, only a part of well-fitting 
samples are activated which indicates more importance during training, and others, 
whose errors are large, will not be ignored. 
In particular, the negative effects of inactivated samples are alleviated 
by the robust loss function. 
Then we furthermore develop an enhanced PCA which adopts a point-wise $\sigma$-loss 
function that interpolates between $\ell_{2,1}$-norm and squared Frobenius-norm and meanwhile 
retains the rotational invariance property. 
Extensive experiments are conducted on occluded datasets from two aspects including 
reconstructed errors and clustering accuracy. 
The experimental results prove the superiority and effectiveness of our model.
\end{abstract}

\begin{IEEEkeywords}
Dimensionality reduction, principal component analysis, collaborative-robust learning.
\end{IEEEkeywords}

\section{introduction}\label{sec:introduction}

Principal component analysis (\textit{PCA}) \cite{pca,l1pca-nongreedy,pca-16} 
as an unsupervised approach has been widely applied in diverse fields such as 
astronomy, biology, economics, and computer vision. 
The intention of conventional PCA can be stated from different perspectives. 
On the one hand, PCA can be interpreted as to project raw data into 
low-dimensional space which remains the largest variance of transformed data. 
Therefore, PCA is conventionally viewed as an unsupervised learning algorithm 
of dimensionality reduction. 
Since the dimensionality of data in many real applications could be extremely 
high, learning algorithms frequently suffer from the dimension curse. 
Consequently, PCA, owing to no need for supervised information, is often 
employed to reduce dimension during the preprocessing phase. 
On the other hand, the motivation of PCA is to find a low-rank approximation 
of data points. In other words, PCA intends to find a subspace with the 
definite rank whose error is as small as possible. 
According to this, PCA is also employed to compress data and reconstruct data points from 
polluted ones. 
In particular, plenty of 2DPCA \cite{2dpca-04,2dpca-05,2dpca-12,2dpca-capped} 
models have been proposed to process spatial data like images.

In practice, datasets always contain a quantity of noises and outliers. Unfortunately, the classical PCA is sensitive to outliers, which leads to poor performance.
To address this problem, a spectrum of robust PCA algorithms \cite{rpca,l1pca-nongreedy,pcal1,r1pca,l1pca,2dpca-capped,rspca,rpca-om} have been developed by applying different loss functions. 
For instance, \cite{pcal1} simply replaces squared $\ell_2$-norm with $\ell_{1}$-norm. 
R1PCA \cite{r1pca} utilizes the robust $\ell_{2,1}$-norm, \textit{i.e.}, non-squared $\ell_2$-norm, and thus retains the rotational invariance as well. 
L1PCA \cite{l1pca} modifies the objective and applies $\ell_1$-norm such that it has rotational invariance. 
Furthermore, the algorithm proposed in \cite{2dpca-capped} promotes the robustness via capped models \cite{capped-model-15,capped-model-17}. 
The robust PCA with optimal mean (PCA-OM) \cite{rpca-om} takes the mean into account as well as the projection matrix, and therefore gains impressive performance. 

However, the existing PCA methods still have several drawbacks. 
Firstly, the robustness of their losses is limited. 
Specifically speaking, the $\ell_{2,1}$-norm is sensitive to small noises and the capped models \cite{2dpca-capped} intensively depend on the choice of $\epsilon$ which varies dramatically on different datasets and is hard to set an appropriate value. 
Secondly, the highly polluted samples and well-fitting samples are treated equally which may do harm to the final result. 
More precisely, the existing algorithms only consider how to alleviate the errors caused by outliers but not improve the impact of well-fitting samples. 
As a result, the mentioned limitations hinder the performance of the existing robust PCA techniques.

This paper is a deep extension of our conference paper \cite{PCA-AN} which 
intends to reduce dimension via detecting clean and occluded samples 
and learn their importance automatically.
Considering the motivations of all important variations of PCA and our previous work,
we further propose a novel model, Enhanced Principal Component Analysis (\textit{EPCA}), 
in this paper. 
The motivation of EPCA is shown in Fig. \ref{figure_framework}. 
The merits of this paper are listed as follows:
\begin{itemize}
    \item [1.] We proposed a novel framework named as collaborative-robust weight learning which utilizes the robust loss to reduce the errors caused by outliers and novel weight learning to enhance the effect of well-fitting data simultaneously. In particular, only $k$ best-fitting samples are activated. Therefore, the robustness of models is enhanced intensively.
    \item [2.] A novel robust loss, $\sigma$-loss, is designed to address the limitation of loss functions in existing PCA approaches. Moreover, the point-wise $\sigma$-loss is further developed to adapt the collaborative-robust framework.
    \item [3.] Under the framework, a novel model, Enhanced Principal Component Analysis (\textit{EPCA}), is proposed. EPCA can learn weights for all samples to augment the effect of well-fitting samples which cooperates with the robust loss to promote the robustness of our model.
\end{itemize}

\textbf{Notations}: In this paper, all uppercase words denote matrices 
while vectors are written in bold lowercases. 
For a vector $\bm a$, $\|\bm a\|_1$ and $\|\bm a\|_2$ denote the $\ell_1$ 
and $\ell_2$-norm respectively. 
For a matrix $A$, the $i$-th column is represented as 
$\bm a_i$,
the $(i, j)$-th entry is $a_{ij}$, 
and the transpose is denoted by $M^T$.
$\|A\|_F^2 = \sum_{i,j} a_{ij}^2$ represents the squared Frobenius-norm 
while $\|a\|_{2,1} = \sum_i \|\bm a_i\|_2$ denotes the $\ell_{2,1}$-norm. 
When $A$ is a squared matrix, ${\rm tr}(A)$ is the trace operator where 
${\rm tr}(A) = \sum_i m_{ii}$. 
$\mathcal{R}(A)$ denote the linear space spanned by its column vectors.  
$\nabla f(\bm x)$ represents the gradient of $f(\bm x)$ w.r.t. $\bm x$, 
\textit{i.e.}, $\nabla f(\bm x) = [\partial f / \partial x_2, \partial f / \partial x_2, \cdots, \partial f / \partial x_n]^T$.
To keep the organization clear, all proofs are shown in Section \ref{section_proof}.

\begin{figure} [t]
    \centering
    \includegraphics[width=0.95\linewidth]{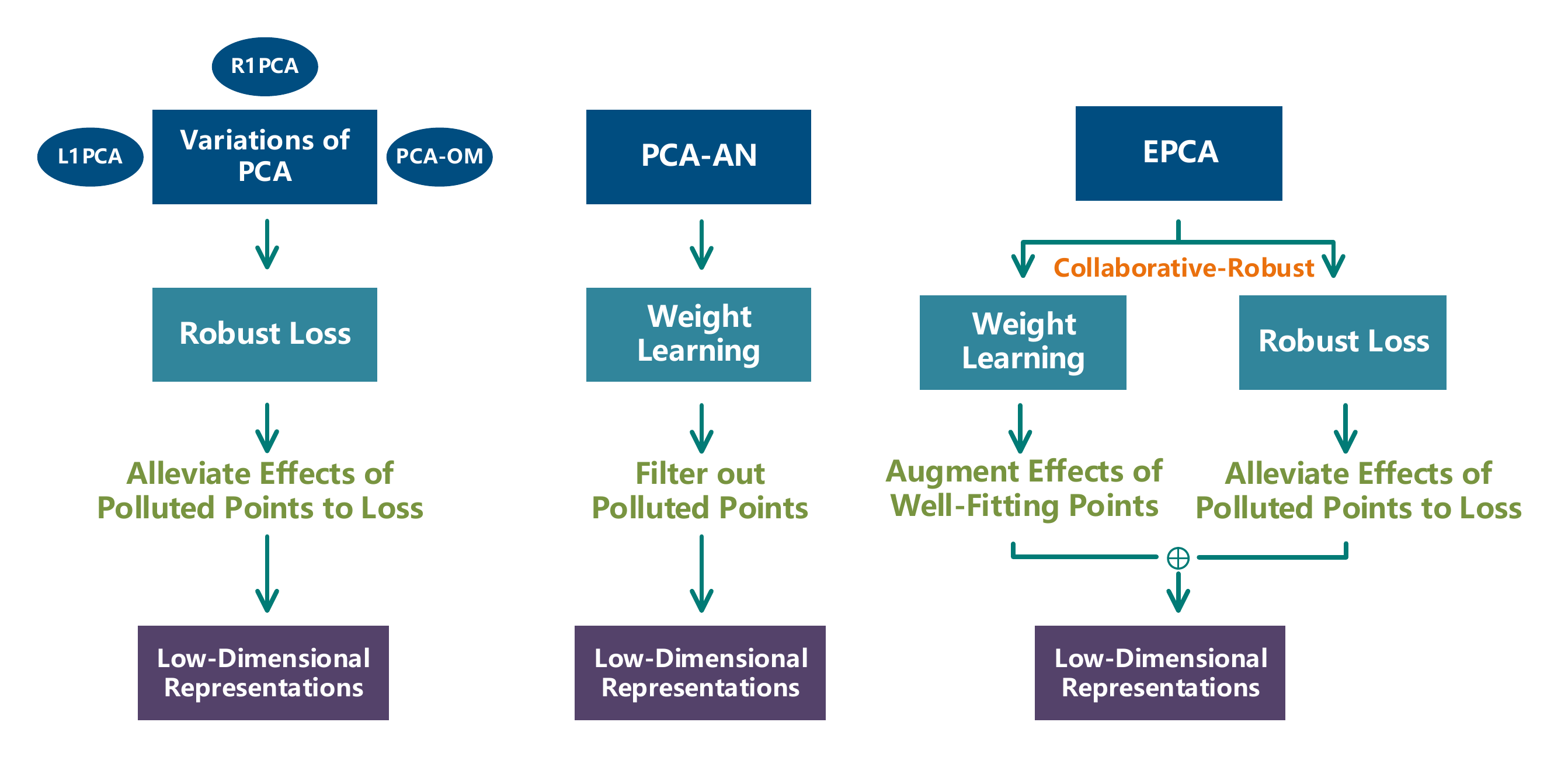}
    \caption{Core motivations of varitions of PCA, PCA-AN, and EPCA. 
    PCA-AN is our conference work.}
    \label{figure_framework}
    \vspace{-3mm}
\end{figure}

\section{Related work}
As a well-known unsupervised dimensionality reduction method, the classical 
PCA \cite{pca} attempts to project the raw features into a low-dimensional space via a 
linear transformation. Formally speaking, given a dataset $X \in \mathbb{R}^{d \times n}$
where the $i$-th column $\bm x_i \in \mathbb{R}^d$ represents the $i$-th sample,
the goal of PCA is to find a $c$-dimensional representation for $X$, which 
remains the information and structure of the original features. 
The standard procedure of PCA consists of two steps: 
    1) Preprocessing: Centralize $X$ via $X - \frac{1}{n} \sum_{i=1}^n \bm x_i \textbf{1}^T$;
    2) Eigenvalue decomposition (or singular value decomposition for acceleration): Find a linearly transformed
space $U^T X$ that maximizes the variance of $n$ samples. 
As classical PCA is sensitive to outliers and practical datasets are always 
occluded by noises more or less, plenty of methods \cite{l1pca,l1pca-nongreedy,lppca,pcal1,r1pca,rspca} 
have been proposed to promote the robustness of PCA. For instance, L1PCA \cite{l1pca,l1pca-nongreedy} 
utilizes $\ell_1$-norm to measure the errors, while L$p$PCA \cite{lppca} employs $\ell_p$-norm 
to further improve the scalability. 
Besides, the models proposed in \cite{r1pca,rspca,pca-16,rpca-om} also intend to promote 
the robustness via replacing the measurement to alleviate the impacts of polluted samples. 
Overall, all these methods get stuck in finding a robust loss to cope with noises and 
outliers such that the performance hits a bottleneck. 
They fail to exploit the importance of different data points. 
In other words, since there are only a small amount of data points that are occluded 
severely, an ideal dimensionality reduction model should be able to distinguish
clean samples and polluted samples automatically and then treat them differently
during training. Accordingly, our conference paper \cite{PCA-AN} proposes 
an extension of PCA that depends on no robust loss but introduces a point-wise 
importance learning mechanisms into PCA. The innovation brings impressive 
improvement in performance. Considering the intentions of theses existing methods,
we further propose a novel extension of PCA under a general learning framework.

\section{A Collaborative-Robust Framework for Weight Learning}
\label{section_framework}

Our previous conference paper \cite{PCA-AN} focuses on filtering out outliers automatically via a 
sparse weight learning framework. The framework detects outliers and noises according to the loss of each sample. 
More specifically, the samples with large loss will be assigned small weights, 
or even be neglected by setting the corresponding weights as 0 automatically. 
Through treating data points differently during training, the weight learning
mechanism is able to improve the robustness intensively.

The classical thought to enhance the robustness is to utilize a robust loss 
to replace the conventional $\ell_2$-norm and Frobenius-norm. 
The underlying motivation is to alleviate the impact of outliers on the 
loss function since the polluted points usually result in large losses.
An ideal robust loss should be able to avoid producing exaggerated values 
when encountering outliers. 

Intuitively, combining the two methods is a feasible way to further improve
the performance. However, \textbf{do we really need both weight learning and robust loss meanwhile?}

Since both of the two mechanisms promote the robustness by reducing the effects 
caused by occluded points, it is reduplicative to employ both of them. 
In practice, we may need only one of them to enhance the models due to the same motivation. 
Along with this insight, we extend our conference work via developing a framework,
which incorporates them elegantly. 
Under this framework, the weight learning acts on clean data while the robust 
loss focuses on polluted data, such that they work collaboratively.

\subsection{A Collaborative-Robust Framework for Weight Learning}

As we emphasize in the last subsection, the purposes of weight learning and robust loss 
are reduplicative. 
In this section, we propose a general framework which can employ both 
adaptive weights and a robust loss naturally.
All existing methods only 
concentrate on decreasing errors of polluted samples via robust loss functions 
but fail to promote the impact of well-fitting ones. 
Here we propose a novel framework, named as collaborative-robust (\textit{co-robust}) framework, that 
has the following merits:
\begin{itemize}
    \item [1)] The adaptive weight learning attempts to augment the impact of well-fitting 
    samples, \textit{i.e.}, activated samples; 
    \item [2)] the robust function intends to alleviate disturbance of samples with large 
    noises, \textit{i.e.}, inactivated samples.
\end{itemize}
\subsubsection{Formulation}
The proposed co-robust weight learning framework is formulated as 
\begin{equation}
    \label{co_robust_framework}
    \begin{split}
        \min \limits_{w_i, \bm \theta} & \sum \limits_{i=1}^n \frac{1}{1 - w_i} f(\bm x_i; \bm \theta),
        ~~ s.t.~  \sum \limits_{i=1}^n w_i = 1, 0 \leq w_i < 1.
    \end{split}
\end{equation}
where $w_i$ is the corresponding weight, $f(\bm x_i; \bm \theta)$ measures the error of the sample $\bm x_i$, $\bm \theta$ represents the parameters to learn. Formally, we can further formulate $f(\bm x_i; \bm \theta)$ as
\begin{equation}
	f(\bm x_i) = h(\bm g(\bm x_i; \bm \theta)).
\end{equation}
$\bm g(\bm x_i; \bm \theta)$ is the a vector-output, or matrix-output function that is defined to calculated the loss of $\bm x_i$. $h(\cdot)$ is the loss function.
For instance, in the least-squares regression \cite{lasso}, $\bm g(\bm x_i) = \bm \theta^T \bm x_i - y_i$ and $h(\cdot) = \|\cdot\|_2^2$ where $y_i$ denotes the target of $\bm x_i$. 
In practice, many machine learning methods aim to improve the performance by employing a robust loss function such as $\ell_1$-norm, $\ell_{2,1}$-norm, capped norm, \textit{etc}.
In the following discussion, we will use $f(\cdot)$ rather than $h(\bm g(\bm x_i; \bm \theta))$ to keep notations uncluttered if unnecessary. 

Here we give some insights about the above elegant framework. 
Let $\hat w_i = \frac{1}{1 - w_i}$ be the \textit{direct weight} of $\bm x_i$.
The direct weight is a quantity that is positive correlated with weight. 
It implies the real importance of $x_i$ to the final learner.
From the intuitive aspect, samples with less reconstruction errors will be 
assigned to larger weights to minimize the global loss. 
The direct weight, positive correlated with $w_i$, implies the real importance of $x_i$ during training of the model.
Unlike sparse models, samples whose weights are assigned to 
0 are still retained in the model since $\hat w_i = \frac{1}{1 - 0} = 1$. 
In this case, samples with $w_i = 0$, \textbf{the negative impact will be alleviated 
by the robust loss $h(\cdot)$}.
On the contrary, if $\bm x_i$ is assigned to a large weight (\textit{i.e.}, 
$w_i \rightarrow 1$), then the direct weight will approach to infinity (\textit{i.e.}, 
$\hat w_i \rightarrow \infty$) such that \textbf{the samples will be important ones to the model}.
Therefore, our model is equivalent to augment the importance of samples 
with less reconstruction errors, and alleviate the negative effects of the 
highly polluted samples meanwhile.

From a mathematical perspective, we have the following formulation via Taylor's 
theorem
\begin{equation}
    g(w) = \frac{1}{1 - w} = \sum \limits_{j=0}^\infty \frac{g^{(j)}(0)}{k!} w^j = \sum \limits_{j=0}^\infty w^j ,
\end{equation}
where $g^{(j)}(0)$ is the $k$-order derivative. Consequently, Eq. (\ref{co_robust_framework}) can be rewritten as
\begin{equation}
    \label{fusion}
    \begin{split}
        \min \limits_{w_i, \bm \theta} & \sum \limits_{j=1}^\infty \sum \limits_{i=1}^n w_i^j f(\bm x_i; \bm \theta), 
        ~~ s.t.~  \sum \limits_{i=1}^n w_i = 1, 0 \leq w_i < 1.
    \end{split}
\end{equation}
Note that every item $\sum \limits_{i=1}^n w_i^j f(\bm x_i; \bm \theta)$ is the adaptive weighted learning with the exponential regularization which is widely applied in weight learning tasks. So the above objective defined in Eq. (\ref{co_robust_framework}) could be viewed as a fusion of weighted learning with different $j$. 
The sparsity of the exponential regularized weight learning is not gauranteed. 
Surprisingly, the solution of the co-robust framework can be proved as a sparse 
solution, which will be elaborated in the next part. 
Apart from the sparsity, the designed weight learning is parameter-free, which is another
attractive property.

\subsubsection{Optimization}
Suppose $\forall i, f_i > 0$. To learn $w_i$ adaptively, we fix $x_i$ and thus the Lagrangian of problem (\ref{co_robust_framework}) is given as 
\begin{equation}
    \label{co_robust_Lagrangian}
    \mathcal L = \sum \limits_{i=1}^n \frac{f_i}{1 - w_i} + \lambda (1 - \sum \limits_{i=1}^n w_i) + \sum \limits_{i=1}^n \gamma_i(-w_i) ,
\end{equation}
where  $\lambda$, $\gamma_i$ are Lagrangian multipliers and $f_i$ represents $f(x_i)$ for simplicity. 
The KKT conditions are listed as follows
\begin{equation}
    \label{co_robust_framework_KKT}
    \left \{
    \begin{array}{c l}
            \frac{f_i}{(1-w_i)^2} - \lambda - \gamma_i &= 0 ,\\
            \gamma_i w_i &= 0 ,\\
            \gamma_i &\geq 0 ,\\
            \sum \limits_{i=1}^n w_i &= 1 .
    \end{array}
    \right.
\end{equation}
According to the third condition, we obtain
\begin{equation}
    \begin{array}{l}
        \left \{
        \begin{aligned}
            &w_i \neq 0,  \gamma_i = 0 \Rightarrow w_i = 1 - \sqrt{\frac{f_i}{\lambda}} > 0 ,\\
            &w_i = 0, \gamma_i \geq 0 \Rightarrow 1 - \sqrt{\frac{f_i}{\lambda}} \leq 0 .\\
        \end{aligned}
        \right.
    \end{array}
\end{equation}
\begin{algorithm}[t]
    \caption{Algorithm to optimize problem (\ref{co_robust_framework})}
    \label{algorithm_framework}
    \begin{algorithmic}
        \REQUIRE Arbitrary initial weights $\{w_i\}_{i=1}^n$ and variables to optimize $\{\bm x_i\}_{i=1}^n$ with proper initialization.\\
        \REPEAT
            \STATE Fix $w_i$ and update $x_i$.
            \REPEAT
                \STATE Find valid $k$ via certain strategy. \\
            \UNTIL{$k$ satisfies constraint (\ref{k_constraint}).}
            \STATE Update $w_i$ by Eq. (\ref{w_update}).
        \UNTIL{convergence}
        \ENSURE Weights $\{w_i\}_{i=1}^n$ and variables $\{x_i\}_{i=1}^n$.
    \end{algorithmic}
\end{algorithm}
Accordingly, the formulation to update $w_i$ could be written as
\begin{equation}
    \label{w_update_raw}
    w_i = (1 - \sqrt{\frac{f_i}{\lambda}})_+ .
\end{equation}
where $(x)_+ = max(0, x)$. Without loss of generality, suppose $f_1 \leq f_2 \leq \cdots \leq f_n$, 
which means $w_1 \geq w_2 \geq \cdots \geq w_n$. 
Without loss of generality, assume that
\begin{equation}
    \label{lambda_constraint}
    \sqrt{f_k} < \sqrt \lambda \leq \sqrt{f_{k+1}} ,
\end{equation}
then $w_i = 0$ for any $i > k$. Combining with the forth formula of Eq. (\ref{co_robust_framework_KKT}), we have
\begin{equation}
    \label{co_robust_framework_lambda_update}
    \sum \limits_{i=1}^n w_i = k + \sum \limits_{i=1}^k \sqrt{\frac{f_i}{\lambda}} = 1 \Rightarrow \sqrt{\lambda} = \frac{1}{k-1} \sum \limits_{i=1}^k \sqrt f_i .
\end{equation}
Substitute Eq. (\ref{co_robust_framework_lambda_update}) into Eq. (\ref{w_update_raw}) and we obtain
\begin{equation}
    \label{w_update}
    w_i = (1 - \frac{(k-1) \sqrt{f_i}}{\sum \limits_{j=1}^k \sqrt{f_j}})_+ .
\end{equation}
However, $k$ is not a hyper-parameter, \textit{i.e.}, not an arbitrary integer, since $k$ has to satisfy the following constraint
\begin{equation}
    \label{k_constraint}
    \sqrt{f_k} < \sqrt \lambda \leq \sqrt{f_{k+1}}
    \Rightarrow
    \frac{\sum \limits_{i=1}^k \sqrt{f_i}}{\sqrt{f_{k+1}}} + 1 \leq k < \frac{\sum \limits_{i=1}^k \sqrt{f_i}}{\sqrt{f_{k}}} + 1 ,
\end{equation}
which is derived from Eq. ({\ref{lambda_constraint}). 
Thus the optimal $\bm w$, whose $i$-th element is $w_i$, 
only has $k$ non-zero entries provided that $f_i > 0$ for any $i$. 
Since we assume that $f_i > 0$, $\frac{\sum _{i=1}^k \sqrt{f_i}}{\sqrt{f_{k+1}}} > 0$,
which indicates $k \geq 2$. Accordingly, our method will not get stuck in 
the common trivial solution, where only one weight is non-zero, of 
weight learning.
Furthermore, the following theorem proves that $k$ is unique in this case.

\begin{myTheo} \label{uniqueness}
    The number of activated samples, $k$, is unique if $f_i > 0$ holds where $i = 1, 2, \cdots, n$.
\end{myTheo}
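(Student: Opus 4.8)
The plan is to recast the two-sided bound (\ref{k_constraint}) that determines $k$ as a single sign-change condition on a monotone sequence, so that at most one integer can satisfy it. Write $S_k = \sum_{i=1}^k \sqrt{f_i}$ and set $D_k = (k-1)\sqrt{f_{k+1}} - S_k$. Clearing denominators in the left inequality of (\ref{k_constraint}), namely $S_k/\sqrt{f_{k+1}} + 1 \le k$, yields $S_k \le (k-1)\sqrt{f_{k+1}}$, i.e. $D_k \ge 0$. For the right inequality $k < S_k/\sqrt{f_k} + 1$ I would clear denominators to get $(k-1)\sqrt{f_k} < S_k$ and then use $S_k = S_{k-1} + \sqrt{f_k}$ to rewrite the left-hand side, obtaining the identity $(k-1)\sqrt{f_k} - S_k = (k-2)\sqrt{f_k} - S_{k-1} = D_{k-1}$. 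Hence the right inequality is exactly $D_{k-1} < 0$, and an index $k$ satisfies (\ref{k_constraint}) if and only if $D_{k-1} < 0 \le D_k$.

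Next I would show that $(D_k)$ is non-decreasing. Using $S_{k+1} = S_k + \sqrt{f_{k+1}}$, a direct computation gives $D_{k+1} - D_k = k\,(\sqrt{f_{k+2}} - \sqrt{f_{k+1}})$, which is non-negative because the losses are sorted so that $f_{k+1} \le f_{k+2}$. Moreover $D_1 = -\sqrt{f_1} < 0$ since $f_1 > 0$, so the sequence starts strictly below zero and increases monotonically thereafter.

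The uniqueness is then immediate: for a non-decreasing sequence, once $D_k$ reaches $0$ it never drops back below $0$, so the pattern ``$D_{k-1} < 0$ and $D_k \ge 0$'' can occur at most once, namely at the single index where the sign flips. That unique flip point is the only $k$ compatible with (\ref{k_constraint}), which proves the claim. (Existence of such a flip, and hence of a valid $k$, is handled separately by the endpoint values of $(D_k)$.) I expect the only real obstacle to be the algebraic bookkeeping in the first step, specifically the identity that re-expresses the right-hand bound at index $k$ through $D_{k-1}$; aligning both bounds against the same monotone sequence is the crux, after which monotonicity makes uniqueness automatic. As an alternative route, one could instead observe that $\sum_i f_i/(1-w_i)$ is strictly convex in $\bm w$ on the feasible set, so the optimal $\bm w$ and the multiplier $\lambda$ are unique, and $k = |\{i : f_i < \lambda\}|$ is thereby forced to be unique as well.
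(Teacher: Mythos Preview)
Your proof is correct and takes a genuinely different, more elementary route than the paper. The paper argues via the optimization structure: it eliminates $w_1$ using the simplex constraint, computes the Hessian of $\mathcal{J}=\sum_i f_i/(1-w_i)$ with respect to $(w_2,\dots,w_n)$, shows it is positive definite (hence the minimizer $\bm w$ is unique), then observes that $w_i=(1-\sqrt{f_i/\lambda})_+$ is monotone in $\lambda$ so that the constraint $\sum_i w_i=1$ pins down a unique $\lambda$, and finally reads off $k$ from $\sqrt{f_k}<\sqrt{\lambda}\le\sqrt{f_{k+1}}$. Your argument bypasses all the optimization machinery and works directly on the two-sided inequality~(\ref{k_constraint}): the key identity $(k-1)\sqrt{f_k}-S_k=D_{k-1}$ collapses both bounds into the single sign-change condition $D_{k-1}<0\le D_k$ on the non-decreasing sequence $D_k=(k-1)\sqrt{f_{k+1}}-S_k$, from which uniqueness is immediate. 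This is cleaner and self-contained; the paper's approach, on the other hand, explains \emph{why} uniqueness holds in terms of the underlying convex program, which is conceptually informative but heavier. Amusingly, the alternative you sketch in your last sentence is precisely the paper's proof.
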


\begin{figure}[t]
    \centering
    \subfigure[$\sigma=0.1$]{
        \includegraphics[width=0.46\linewidth]{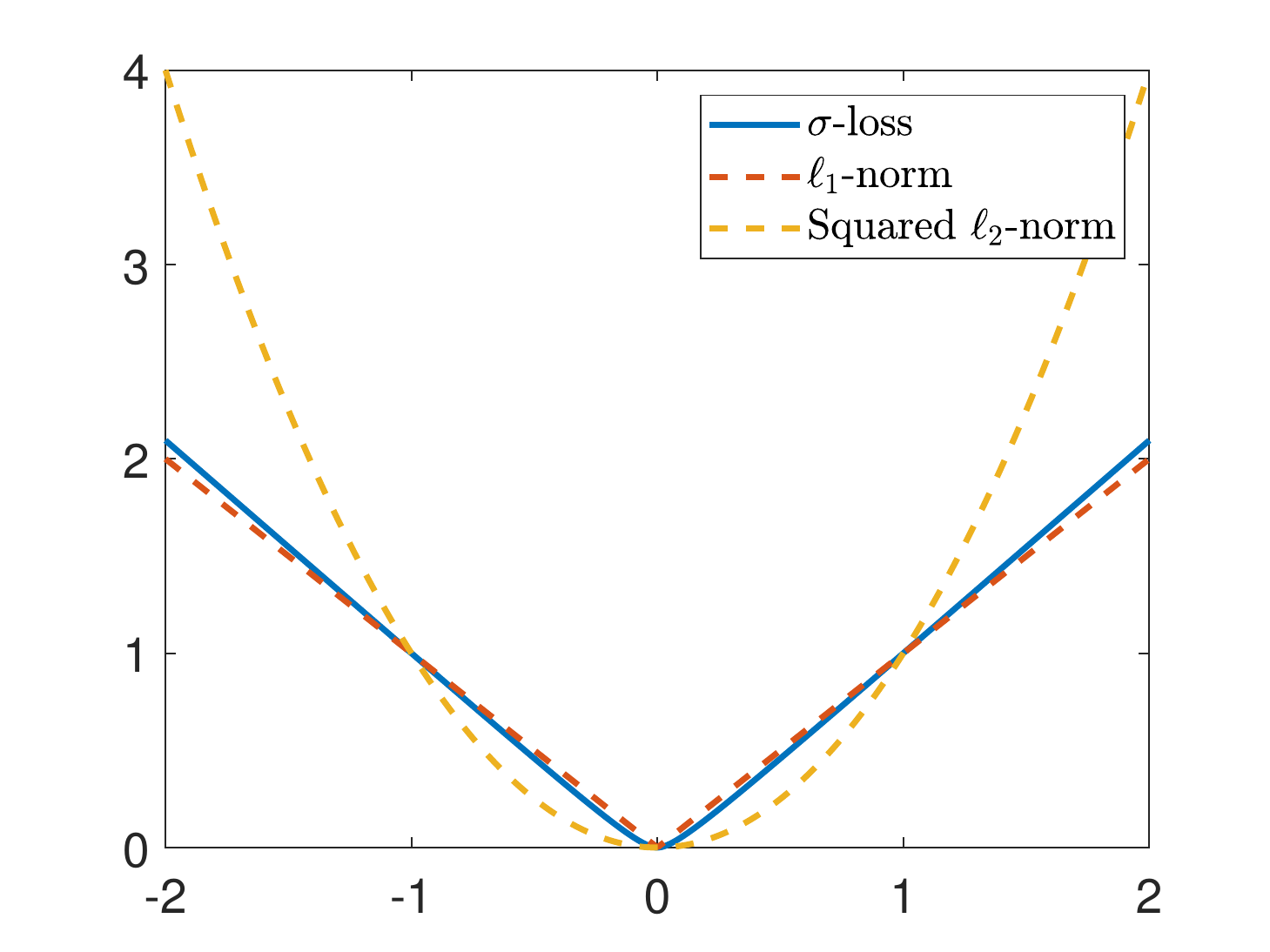}
    }
    \subfigure[$\sigma=1$]{
        \includegraphics[width=0.46\linewidth]{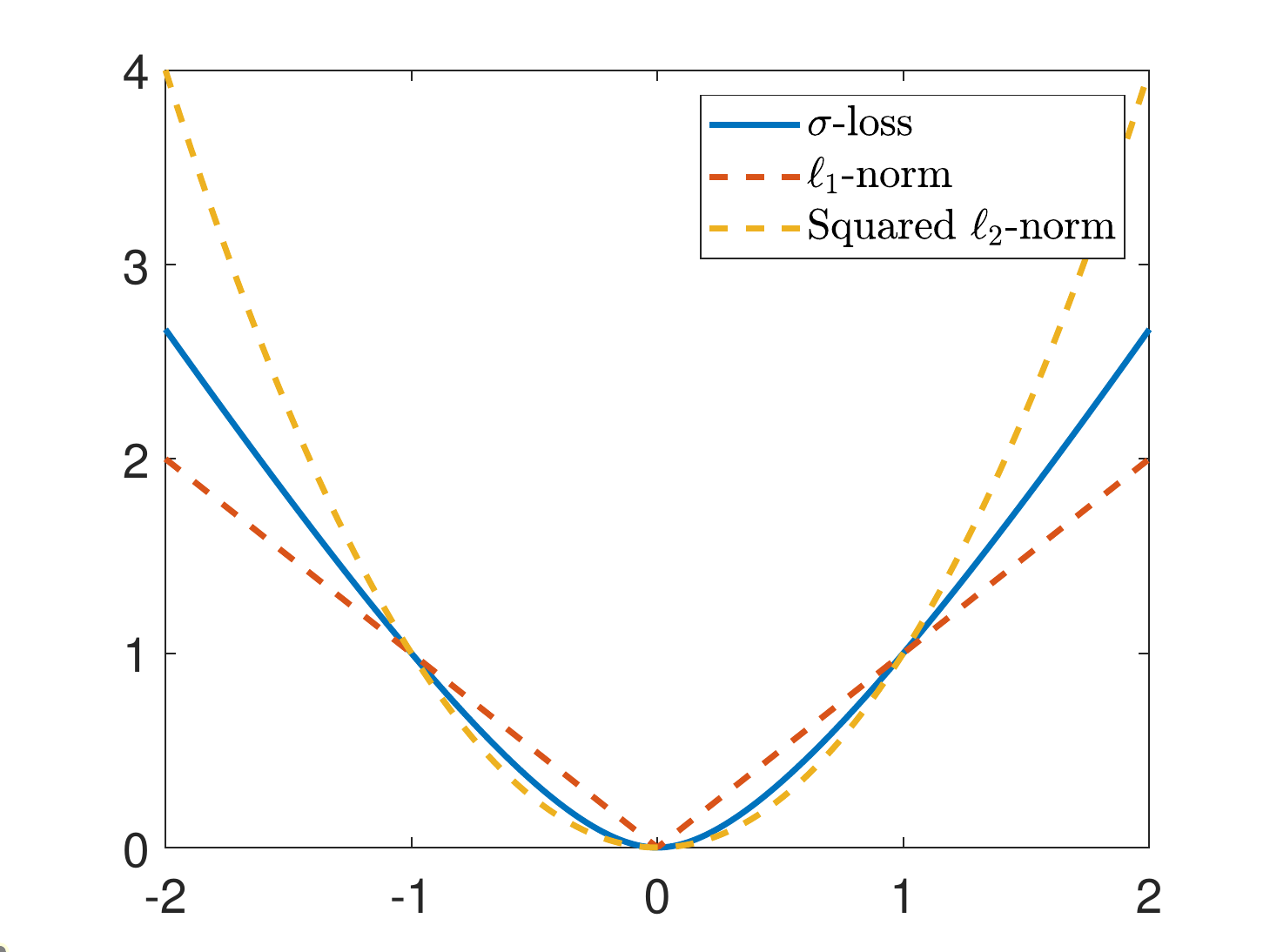}
    }

    \caption{Illustrations of the $\sigma$-loss function when 
    $A \in \mathbb R^{1 \times 1}$. Note that the squared Frobenius 
    norm of vectors becomes squared $\ell_2$-norm and the $\ell_{2,1}$-norm of vectors is $\ell_1$-norm.}
    \label{figure_adaptive}
    \vspace{-3mm}
\end{figure}

\begin{myRemark}
Assume that $\exists f_i, f_i = 0$. Let $\mathcal D = \{i \mid f_i = 0\}$ 
and $|\mathcal D|$ represents the size of $\mathcal D$. 
When $|\mathcal D| > 1$, it is obvious that $k$ can be any integers lying 
in $[2, |\mathcal D|]$ and $w_{i \notin \mathcal D} = 0$. 
In this case, we can simply set the corresponding weights 
$w_i = \frac{1}{|\mathcal D|}$. 
If $|\mathcal D| = 1$, we can simply add a tiny value to $f_{i \in \mathcal D}$
such that $k \geq 2$ holds and the trivial solution can be avoided. 
\end{myRemark}

In practical scenarios, loss of each sample is usually positive such 
that the condition of Theorem \ref{uniqueness} holds. 
The concrete procedure of optimization based on $f_i > 0$ 
is summarized in Algorithm \ref{algorithm_framework}.

\vspace{-1mm}

\subsection{Extension of Co-Robust Weight Learning}
As shown in Eq. (\ref{fusion}), the framework is equivalent to the fusion 
of weight learning under the exponential regularization. 
To increase (or decrease) the amount of the adopted items, 
the proposed co-robust weight learning can be extended as

\begin{equation}
    \begin{split}
        \min \limits_{w_i, \bm \theta} & \sum \limits_{j=1}^\infty \sum \limits_{i=1}^n w_i^{\gamma \cdot j} f(\bm x_i; \bm \theta), 
        ~~ s.t.~  \sum \limits_{i=1}^n w_i = 1, 0 \leq w_i < 1.
    \end{split}
\end{equation}
where $\gamma > 0$ is a hyper-parameter to control the density of fused items. 
If $\gamma < 1$, the density of fused items increases; if $\gamma > 1$, 
the density of adopted items becomes more sparse. 
To keep simplicity, the above formulation can be rewritten as
\begin{equation}
    \label{co_robust_framework_extension}
    \begin{split}
        \min \limits_{w_i, \bm \theta}~ & \sum \limits_{i=1}^n \frac{1}{1 - w_i^\gamma} f(\bm x_i; \bm \theta), 
        ~~ s.t. ~ \sum \limits_{i = 1}^n w_i = 1, 0 \leq w_i < 1 .
    \end{split}
\end{equation}
Furthermore, the uniqueness of $\{w_i\}_{i=1}^n$ still holds provided that 
$f_i > 0$ for any $i$. 
The proof is similar with the one of Theorem \ref{uniqueness}. 
Unfortunately, the solution of Eq. (\ref{co_robust_framework_extension}) is 
not similar with the one in Eq. (\ref{co_robust_framework}) any more. 
In this paper, we just pay more attentions on the special case when 
$\gamma = 1$, \textit{i.e.}, Eq. (\ref{co_robust_framework}). Extensive experiments 
demonstrate that even the simple setting of $\gamma$ can lead to impressive 
performance.

\subsection{An Example of Co-Robust Framework with A Scalable Robust Loss: $\sigma$-Loss}
According to the above discussions, the weight learning of the proposed framework
is parameter-free. Therefore, a robust loss with hyper-parameters to tune the 
robustness is recommended. The potential motivation to use a robust loss 
with hyper-parameters is to ensure the scalability. It should be emphasized 
that one can employ robust losses without extra parameters (\textit{e.g.}, 
$\ell_1$-norm, $\ell_{2,1}$-norm, \textit{etc.}) if a parameter-free model 
is required.

\subsubsection{$\sigma$-Loss}

Here we develop a robust loss, $\sigma$-loss, that interpolates between 
$\ell_{2,1}$-norm and squared Frobenius-norm and can be extended into 
point-wise form easily. 
The motivation of $\sigma$-loss is that the objective functions of 
many learning algorithms (\textit{e.g.}, least 
squares regression, $k$-means, PCA, \textit{etc}.) can be viewed as 
the squared Frobenius norm which is too sensitive to outliers. 
Consequently, a quantity of algorithms, like variations of PCA \cite{r1pca,rpca-om}, 
have been developed to promote the robustness of model by applying the more 
robust loss, $\ell_{2,1}$-norm.
However, the drawback of $\ell_{2,1}$-norm is the sensitivity to small
perturbations compared with squared Frobenius-norm. Contrasively,  the 
value of squared Frobenius-norm increases quadratically with the increase 
of input, but it is robust to small perturbations. 
Therefore, we attempt to design a loss that balances the two losses. 
The definition  of $\sigma$-loss for matrix $A \in \mathbb R^{m \times n}$ 
is given as follows 
\begin{equation} \label{adaptive_loss}
    \|A\|_\sigma = \sum \limits_{i=1}^m \frac{(1 + \sigma)\|\bm a_i\|_2^2}{\|\bm a_i \|_2 + \sigma} ,
\end{equation}
where $\sigma$ controls robustness to different type outliers. 
To further understand the motivation of $\sigma$-loss, we give 
some insights as follows:
\begin{itemize}
    \item [1.] $\|A\|_\sigma$ is non-negative, twice differential and convex such that it is appropriate as a loss function.
    \item [2.] When $\sigma \rightarrow 0$, then $\|A\|_\sigma \rightarrow \|A\|_{2,1}$.
    \item [3.] When $\sigma \rightarrow \infty$, then $\|A\|_\sigma \rightarrow \|A\|_F^2$.
\end{itemize}
Fig. \ref{figure_adaptive} demonstrates the effect of $\sigma$-loss 
function when $A \in \mathbb R^{1 \times 1}$. 
Clearly, $\sigma$-loss function interpolates between 
$\ell_{2,1}$-norm and squared Frobenius-norm.

\begin{myRemark}
    $\sigma$-loss is not a valid norm since $\forall k \geq 0, 
    \|k A\|_\sigma \neq k \|A\|_\sigma$ .
\end{myRemark}

\subsubsection{A Specific Example for Co-Robust Framework with Point-Wise $\sigma$-Loss}
To apply $\sigma$-loss into the proposed co-robust framework, 
Eq. (\ref{adaptive_loss}) is extended into the point-wise form. 
To simply the discussion, we define 
\begin{equation} 
	\|\bm a\|_{\hat \sigma} = \frac{(1+\sigma) \|\bm a\|_2^2}{\|\bm a\|_2 + \sigma} .
\end{equation}
Accordingly, let $h(\cdot) = \|\cdot\|_{\hat \sigma}$ and \textbf{the collaborative-robust 
framework with point-wise $\sigma$-loss} is formulated as 
\begin{equation} \label{obj_framework_with_sigma_loss}
	\min \limits_{w_i, \bm \theta} \sum \limits_{i=1}^n \frac{1}{1 - w_i} \|\bm g(\bm x_i; \bm \theta)\|_{\hat \sigma}.
\end{equation}
Therefore, $\sigma$-loss and weight learning promote the robustness collaboratively.
A vital obstacle of applying a robust loss is how to optimize it sufficiently, since the formulation is more complicated than squared Frobenius-norm. 
In the rest of this section, an efficient optimization algorithm is derived and its convergence is proved.

\begin{algorithm}[t]
    \caption{Algorithm to Solve Problem (\ref{opt_general_adaptive_loss}) } 
    \label{algorithm_adaptive}
    \begin{algorithmic}
        \REQUIRE Data vector $\bm{x}$ and $\bm s$ whose $i$-th element is $s_i$
        \REPEAT
        \STATE1. Calculate $d_i = (1+\sigma) \sum \limits_i \frac{\|\bm g(\bm{x}_i; \bm \theta)\|_2+2\sigma}{2(\| \bm g(\bm{x}_i; \bm \theta)\|_2 + \sigma)^2}$;
        \STATE2. Update $\bm{\theta}$ by solving Problem (\ref{opt_adaptive_loss_obj});
        \UNTIL {convergence}
        \ENSURE Solution $\bm{\theta}$.
    \end{algorithmic}

\end{algorithm}

\subsubsection{Optimization of Point-Wise $\sigma$-Loss}
To give a universal optimization algorithm for point-wise $\sigma$-loss, we generalize Eq. (\ref{obj_framework_with_sigma_loss}) as 
\begin{equation} \label{opt_general_adaptive_loss}
    \min \limits_{\bm \theta} \sum \limits_i s_{i}\|\bm g(\bm{x}_i; \bm \theta)\|_{\hat{\sigma}} .
\end{equation}
where $\bm g_i(\cdot)$ is a vector-output function.

Taking the derivative of Eq. (\ref{opt_general_adaptive_loss}) w.r.t. $\bm \theta$ and setting it to 0, we have
\begin{equation} \label{opt_derivative_adaptive_loss}
    2(1+\sigma)\sum \limits_i s_i\frac{\|\bm g(\bm{x}_i; \bm \theta)\|_2+2\sigma}{2(\|\bm g(\bm{x}_i; \bm \theta)\|_2+\sigma)^2}\nabla \bm g(\bm{x}_i; \bm \theta) \cdot \bm g(\bm{x}_i; \bm \theta)=0 .
\end{equation}
Denote
\begin{equation} \label{opt_definition_of_d}
    d_i = (1+\sigma)\frac{\|\bm g(\bm{x}_i; \bm \theta)\|_2 + 2\sigma}{2(\|\bm g(\bm{x}_i; \bm \theta)\|_2 + \sigma)^2} .
\end{equation}
Then Eq. (\ref{opt_derivative_adaptive_loss}) can be rewritten as
\begin{equation} \label{opt_simple_derivative_adaptive_loss}
    2\sum \limits_i s_i d_i \nabla \bm g(\bm{x}_i; \bm \theta) \cdot \bm g(\bm{x}_i; \bm \theta) = 0 .
\end{equation}
If $d_i$ is fixed, then the following problem, 
\begin{equation} \label{opt_adaptive_loss_obj}
    \min \limits_{\bm \theta} \sum \limits_i s_i d_i \|\bm g(\bm{x}_i; \bm \theta)\|_2^2 ,
\end{equation}
has the same solution with Eq. (\ref{opt_simple_derivative_adaptive_loss}). 
The algorithm to solve problem (\ref{opt_general_adaptive_loss}) by updating $d_i$ iteratively is summarized in Algorithm \ref{algorithm_adaptive}. 
The following theorem proves the convergence of Algorithm \ref{algorithm_adaptive}.

\begin{myTheo} \label{convergence}
    The Algorithm \ref{algorithm_adaptive} will monotonically decrease Eq. (\ref{opt_general_adaptive_loss}) in each iteration.
\end{myTheo}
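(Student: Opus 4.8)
The plan is to recognize Algorithm~\ref{algorithm_adaptive} as a majorization–minimization (equivalently, iteratively reweighted least squares) scheme and to exhibit an explicit surrogate that is driven down at each step. First I would introduce the scalar profile $\psi(u) = \frac{(1+\sigma)u}{\sqrt{u}+\sigma}$ for $u \ge 0$, so that the per-sample loss reads $\|\bm g(\bm x_i;\bm\theta)\|_{\hat\sigma} = \psi(u_i)$ with $u_i = \|\bm g(\bm x_i;\bm\theta)\|_2^2$, and the objective (\ref{opt_general_adaptive_loss}) becomes $F(\bm\theta) = \sum_i s_i\,\psi(u_i)$. A one-line differentiation shows that the auxiliary quantity in (\ref{opt_definition_of_d}) is exactly the derivative of this profile, i.e. $d_i = \psi'(u_i) = (1+\sigma)\frac{\sqrt{u_i}+2\sigma}{2(\sqrt{u_i}+\sigma)^2}$. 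This identity is the anchor of the whole argument, since it makes the reweighted problem (\ref{opt_adaptive_loss_obj}) coincide with the minimization of a linear surrogate of $F$.

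The crucial structural fact I would establish next is that $\psi$ is concave on $[0,\infty)$. Writing $z=\sqrt{u}$ and applying the quotient rule, one obtains $\psi''(u) = \frac{1+\sigma}{2z}\cdot\frac{-z-3\sigma}{2(z+\sigma)^3} \le 0$ for all $z>0$ and $\sigma>0$; this sign computation is the technical heart of the proof but is routine. Concavity then yields the tangent inequality $\psi(u) \le \psi(u_0) + \psi'(u_0)(u-u_0)$ for all $u,u_0\ge 0$, providing a linear majorizer of each term $\psi(u_i)$ at the current iterate.

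With these pieces in hand, let $\bm\theta^{(t)}$ be the current iterate, $u_i^{(t)}=\|\bm g(\bm x_i;\bm\theta^{(t)})\|_2^2$, and $d_i^{(t)}=\psi'(u_i^{(t)})$. Solving (\ref{opt_adaptive_loss_obj}) in step~2 minimizes $\sum_i s_i d_i^{(t)} u_i$ over $\bm\theta$, so the update $\bm\theta^{(t+1)}$ satisfies $\sum_i s_i d_i^{(t)} u_i^{(t+1)} \le \sum_i s_i d_i^{(t)} u_i^{(t)}$ (using $s_i\ge 0$, which holds since the $s_i$ are the non-negative direct weights). Summing the tangent inequality against $s_i\ge 0$ then gives
\begin{equation}
F(\bm\theta^{(t+1)}) = \sum_i s_i\,\psi(u_i^{(t+1)}) \le \sum_i s_i\,\psi(u_i^{(t)}) + \sum_i s_i d_i^{(t)}\big(u_i^{(t+1)}-u_i^{(t)}\big) \le F(\bm\theta^{(t)}),
\end{equation}
where the first inequality is majorization and the second is the reweighted-minimization bound. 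This chains the two steps into a monotone decrease of (\ref{opt_general_adaptive_loss}).

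I expect the only genuine obstacle to be the concavity verification: one must confirm $\psi''\le 0$ globally and check the limiting behaviour as $u\to 0^+$, where $\psi'$ stays finite and equal to $(1+\sigma)/\sigma$, so the tangent inequality extends to the closed interval by continuity. Everything else is bookkeeping: the identity $d_i=\psi'(u_i)$ makes step~2 precisely the minimization of the concave majorizer, and non-negativity of the $s_i$ lets the inequality pass through the sum. The degenerate case $u_i^{(t)}=0$ is handled by the same continuity argument rather than as a separate branch.
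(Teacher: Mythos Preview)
Your proof is correct and structurally identical to the paper's: both chain the decrease of the reweighted quadratic in step~2 with a per-term inequality to conclude that the true objective drops. The difference lies in how that per-term inequality is justified. The paper isolates it as a standalone Lemma and proves it by brute-force algebra, starting from the factorization $(\|\bm x\|_2-\|\bm y\|_2)^2(\|\bm x\|_2\|\bm y\|_2+2\sigma\|\bm x\|_2+\sigma\|\bm y\|_2)\ge 0$ and expanding until the desired inequality appears. You instead recognize that this inequality is precisely the tangent-line bound $\psi(u)\le\psi(u_0)+\psi'(u_0)(u-u_0)$ for the concave profile $\psi(u)=(1+\sigma)u/(\sqrt{u}+\sigma)$, and you verify concavity by computing $\psi''\le 0$. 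Your route is more conceptual and explains \emph{why} the IRLS reweighting $d_i$ is exactly $\psi'(u_i)$ and hence why the algorithm is a majorization--minimization scheme; the paper's route is more self-contained but obscures this structure. Both arguments require $s_i\ge 0$, which you note explicitly and the paper uses tacitly.
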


\begin{myCorollary}
	Algorithm \ref{algorithm_adaptive} will converge into a local minimum. Furthermore, if $g(\cdot; \bm \theta)$ is convex, then Algorithm \ref{algorithm_adaptive} will converge into the optimum. 
\end{myCorollary}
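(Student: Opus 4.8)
The plan is to derive the Corollary from the monotonic-decrease property already granted by Theorem \ref{convergence}, and then to identify the limit of Algorithm \ref{algorithm_adaptive} as a stationary point (a local minimizer) of the objective in (\ref{opt_general_adaptive_loss}). First I would note that $J(\bm\theta) = \sum_i s_i \|\bm g(\bm x_i;\bm\theta)\|_{\hat\sigma}$ is bounded below by $0$, since each $\|\bm a\|_{\hat\sigma} = (1+\sigma)\|\bm a\|_2^2/(\|\bm a\|_2+\sigma)\ge 0$ and $s_i\ge 0$. By Theorem \ref{convergence} the sequence $\{J(\bm\theta^{(t)})\}$ generated by the algorithm is monotonically non-increasing, and a monotone sequence bounded from below converges; hence $J(\bm\theta^{(t)})\to J^\ast$ for some $J^\ast\ge 0$. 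This already delivers convergence of the objective values.

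The key step is then to characterize the limit via the tangency between the reweighted surrogate and the true objective. Writing $u_i(\bm\theta)=\|\bm g(\bm x_i;\bm\theta)\|_2^2$ and recalling the definition (\ref{opt_definition_of_d}) of $d_i$, one checks that $\nabla J(\bm\theta)=2\sum_i s_i d_i(\bm\theta)\,\nabla \bm g(\bm x_i;\bm\theta)\cdot\bm g(\bm x_i;\bm\theta)$, precisely because $d_i$ equals the derivative of the scalar map $u\mapsto (1+\sigma)u/(\sqrt u+\sigma)$ evaluated at $u_i$. Consequently, the gradient of the surrogate $Q^{(t)}(\bm\theta)=\sum_i s_i d_i^{(t)}\|\bm g(\bm x_i;\bm\theta)\|_2^2$ of Problem (\ref{opt_adaptive_loss_obj}), taken at the current iterate, coincides with $\nabla J(\bm\theta^{(t)})$; this is exactly why (\ref{opt_simple_derivative_adaptive_loss}) reproduces (\ref{opt_derivative_adaptive_loss}). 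Two things follow. If $\nabla J(\bm\theta^{(t)})\ne 0$, then $\bm\theta^{(t)}$ is not a minimizer of $Q^{(t)}$, so solving (\ref{opt_adaptive_loss_obj}) strictly lowers the surrogate and, by the majorization inequality underlying Theorem \ref{convergence}, strictly lowers $J$. At any fixed point $\bm\theta^\ast$ of the update ($\bm\theta^{(t+1)}=\bm\theta^{(t)}=\bm\theta^\ast$), the first-order condition of (\ref{opt_adaptive_loss_obj}) with $d_i$ frozen at $d_i(\bm\theta^\ast)$ is literally the stationarity condition (\ref{opt_derivative_adaptive_loss}) of $J$. Hence the algorithm can halt only at stationary points of (\ref{opt_general_adaptive_loss}), i.e.\ at a local minimum.

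For the convex case I would observe that when $\bm g(\cdot;\bm\theta)$ is convex, each term $\|\bm g(\bm x_i;\bm\theta)\|_{\hat\sigma}$ is convex in $\bm\theta$ (using the convexity of $\|\cdot\|_{\hat\sigma}$ noted in the $\sigma$-loss insights together with the convexity of $\bm g$), so the whole objective $J$ is convex. For a convex objective every stationary point is a global minimizer, and combining this with the stationarity conclusion above shows that Algorithm \ref{algorithm_adaptive} converges to the global optimum.

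The main obstacle I anticipate is making ``converges into a local minimum'' precise, since the monotone argument only guarantees convergence of the values $J(\bm\theta^{(t)})$, not a priori of the iterates $\bm\theta^{(t)}$. The cleanest remedy is to track the surrogate gap: monotonicity forces $\sum_i s_i d_i^{(t)}\big(u_i(\bm\theta^{(t)})-u_i(\bm\theta^{(t+1)})\big)\to 0$, and because the tangent inequality behind Theorem \ref{convergence} is strict whenever $u_i(\bm\theta^{(t)})\ne u_i(\bm\theta^{(t+1)})$, this pins successive residual norms together and drives $\nabla J(\bm\theta^{(t)})\to 0$, so that every limit point is stationary. I would also need to verify carefully the convexity-of-composition claim used in the convex case, which rests on the convexity of $\|\cdot\|_{\hat\sigma}$ asserted earlier in the discussion of $\sigma$-loss.
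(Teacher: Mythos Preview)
The paper does not actually supply a proof of this Corollary; it is stated immediately after Theorem~\ref{convergence} and then the text moves on, with Section~\ref{section_proof} containing proofs only of Theorems~\ref{uniqueness}--\ref{theorem_solution_m} and of Lemma~\ref{lemma1}. So there is nothing to compare your argument against --- your proposal is already more than the paper offers.

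On the substance: your monotone-plus-bounded-below argument for convergence of objective values is correct, and your identification of fixed points with solutions of (\ref{opt_derivative_adaptive_loss}) via the shared first-order condition is exactly the right mechanism. You are also right to flag the two soft spots. First, ``converges into a local minimum'' is stronger than what the machinery actually yields: monotone descent plus the tangency argument gives you that accumulation points are stationary, not that they are local minimizers (a saddle of $J$ would satisfy (\ref{opt_derivative_adaptive_loss}) just as well). The paper's phrasing is informal here, and your hedged formulation (``limit points are stationary'') is the honest conclusion. Second, the convex-case step needs more care than ``$\|\cdot\|_{\hat\sigma}$ convex and $\bm g$ convex implies composition convex'': convexity of a vector-valued $\bm g$ is not standard, and convex-after-convex composition fails without a monotonicity hypothesis on the outer function. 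The cleanest repair is to read the hypothesis as ``$\bm g(\cdot;\bm\theta)$ affine in $\bm\theta$'' (which covers the PCA application), in which case $\|\bm g(\bm x_i;\bm\theta)\|_{\hat\sigma}$ is a convex function composed with an affine map and hence convex, and stationarity indeed implies global optimality.
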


\section{Enhanced Principal Component Analysis under the Co-Robust Framework}
Based on the above discussions, we propose an unsupervised dimensionality 
reduction method, namely Enhance Principal Component Analysis (EPCA), 
via the co-robust framework proposed in Section \ref{section_framework}. 

\subsection{Enhanced Principal Component Analysis}
Consider the space spanned by the data points (\textit{i.e.}, the column 
space of matrix $X$), and then we attempt to find a $c$-rank subspace 
that approximates $X$ as accurate as possible. 
The standard process of PCA and its variations is to centralize the data at 
first.
With the preprocessed data, the objective can be formulated as
\begin{equation} \label{obj_PCA}
    \min \limits_{{\rm rank}(Z) = c} \|X - Z\|_F^2 .
\end{equation}

Instead of preprocessing data mechanically, we intend to incorporate the 
preprocessing into model. In the following part, we will find that the 
co-robust framework generates a specific processing of data.

To understand our motivation, it is necessary to reconsider the aim of the 
centralization. If the data points scatter in an affine space, it is hard 
to find a well-fitting low-rank subspace. Fortunately, an affine space can be transformed
into a linear space via a proper translation, which can be formulated as
\begin{equation}
	\min \limits_{\bm m, {\rm rank}(Z) = c} \|X - \bm m \textbf{1}^T - Z\|_F^2 .
\end{equation}
where $\bm m$ represents the translation vector. 
Specially, the classical PCA and its many variations are equivalent to 
fix $\bm m$ as $\frac{1}{n} \sum_{i=1}^n\bm x_i$.

To apply the proposed co-robust framework to PCA model, Eq (\ref{obj_PCA})
can be rewritten as the point-wise formulation,
\begin{equation}
    \min \limits_{\bm m, {\rm rank}(Z) = c} \sum_{i=1}^n \|\bm x_i - \bm m \textbf{1}^T - \bm z_i\|_2^2.
\end{equation}
Combining with Eq. (\ref{obj_framework_with_sigma_loss}), the co-robust framework can be applied and the objective of 
Enhanced Principal Component Analysis (\textit{EPCA}) is formulated as
\begin{equation} 
    \min \limits_{\bm m, \bm \alpha^T \textbf{1} = 1, 0 \leq \alpha_i < 1, {\rm rank}(Z) = c} \sum \limits_{i=1}^n \frac{1}{1-\alpha_i} \|\bm x_i - \bm m - \bm z_i\|_{\hat \sigma} .
\end{equation}
As ${\rm rank}(Z) = c$, $\mathcal R(Z)$ can be spanned by $c$ orthonormal 
vectors $\{\bm w_i\}_{i=1}^c$ and $\bm z_i$ can be formulated as $\bm z = W \bm v_i$ 
where $W \in \mathbb R^{d \times c}$.
Note that $\bm v_i$ is the coordinate of the $i$-th sample under 
$\{\bm w_i\}_{i=1}^c$. 
Accordingly, the original problem can be transformed into
\begin{equation} \label{obj}
    \min \limits_{\bm m, V, \bm \alpha^T \textbf{1} = 1, 0 \leq \alpha_i < 1, W^T W = I} \sum \limits_{i=1}^n \frac{1}{1-\alpha_i} \|\bm x_i - \bm m - W \bm v_i \|_{\hat \sigma} .
\end{equation}

An attractive property of EPCA is its rotational invariance, which is unavailable
in a large proportion of PCA methods \cite{l1pca,l1pca-nongreedy,lppca}.
Here, we give the definition of the rotational invariance. 

\begin{myDefinition}
    Let $R$ be an orthogonal matrix, $X$ be the dataset, and $\mathcal{F}(X)$ denote a representation 
    learner that returns a new representation of $X$. If $\mathcal{F}(X) = \mathcal{F}(R X)$
    holds for any valid $R$ and $X$, then $\mathcal{F}(\cdot)$ is rotational invariant.
\end{myDefinition}

\begin{myTheo} \label{theorem_rotational_invariant}
    Given an orthogonal matrix $R$ subjected to $R^T R = R R^T = I$, EPCA on 
    $R X$ and $X$ will obtain the same low-rank representation, $V$. In other 
    words, EPCA is rotational invariant. 
\end{myTheo}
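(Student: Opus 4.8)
The plan is to exhibit a bijection between the feasible sets of problem (\ref{obj}) on $X$ and on $RX$ that preserves both the objective value and the coordinate matrix $V$. Everything rests on one elementary fact about the point-wise $\sigma$-loss: writing $\phi(t) = \frac{(1+\sigma)t^2}{t+\sigma}$ for $t \geq 0$, we have $\|\bm a\|_{\hat\sigma} = \phi(\|\bm a\|_2)$, so $\hat\sigma$-loss sees its argument only through the Euclidean norm. Since $\|R\bm a\|_2^2 = \bm a^T R^T R \bm a = \bm a^T \bm a = \|\bm a\|_2^2$ for any orthogonal $R$, it follows at once that $\|R\bm a\|_{\hat\sigma} = \phi(\|R\bm a\|_2) = \phi(\|\bm a\|_2) = \|\bm a\|_{\hat\sigma}$ for every $\bm a$. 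This orthogonal invariance of the loss is precisely the ingredient that was engineered by building the loss from $\ell_2$-norms of residuals, and it is the heart of the theorem.

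First I would define the map $\Phi$ taking a feasible tuple $(\bm m, V, W, \bm\alpha)$ of the $X$-problem to $(R\bm m, V, RW, \bm\alpha)$, and verify that it lands in the feasible set of the $RX$-problem. The constraints on $\bm\alpha$ are untouched and $V$ is unconstrained in both problems; orthonormality is preserved because $(RW)^T(RW) = W^T R^T R W = W^T W = I$. The inverse map $(\bm m, V, W, \bm\alpha) \mapsto (R^T\bm m, V, R^T W, \bm\alpha)$ is feasible by the same computation applied to the orthogonal matrix $R^T$, so $\Phi$ is a bijection between the two feasible sets.

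Next I would compare objectives term by term. For sample $i$, the residual under $\Phi$ satisfies $R\bm x_i - R\bm m - (RW)\bm v_i = R(\bm x_i - \bm m - W\bm v_i)$, and the invariance fact gives $\|R(\bm x_i - \bm m - W\bm v_i)\|_{\hat\sigma} = \|\bm x_i - \bm m - W\bm v_i\|_{\hat\sigma}$. Because the weights $\frac{1}{1-\alpha_i}$ are carried over unchanged, the objective of (\ref{obj}) evaluated at $\Phi(\bm m, V, W, \bm\alpha)$ on $RX$ equals its value at $(\bm m, V, W, \bm\alpha)$ on $X$. An objective-preserving bijection between feasible sets maps minimizers to minimizers, so every optimal solution on $X$ yields, via $\Phi$, an optimal solution on $RX$ with the identical $V$, and conversely through $\Phi^{-1}$. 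Hence the two problems admit the same set of optimal low-rank representations $V$, which is the asserted rotational invariance.

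The subtlety I expect to need the most care is the meaning of ``the same $V$'' rather than any calculation. The factorization $Z = WV$ is determined only up to an orthogonal change of basis inside the fitted $c$-dimensional subspace, i.e. $(W, V)$ and $(WQ, Q^T V)$ give the same $Z$ for any $Q^T Q = I$, so $V$ is generally not unique for either problem. The precise statement the bijection $\Phi$ establishes is that the collection of optimal $V$'s is identical for $X$ and $RX$, with $\Phi$ supplying the explicit correspondence; I would state this gauge freedom openly so that the non-uniqueness of $V$ does not appear to contradict the claim.
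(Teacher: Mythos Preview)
Your proof is correct and follows essentially the same route as the paper: both hinge on the single observation that $\|R\bm a\|_2 = \|\bm a\|_2$ for orthogonal $R$, which forces $\|R\bm a\|_{\hat\sigma} = \|\bm a\|_{\hat\sigma}$ and hence makes the objective invariant under $(\bm m, W) \mapsto (R\bm m, RW)$. Your version is in fact more complete than the paper's, which records only the objective identity and does not explicitly verify feasibility of $RW$, construct the bijection, or acknowledge the gauge freedom in $V$ that you flag.
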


\begin{algorithm}[t]
    \caption{Algorithm to solve problem (\ref{obj})}
    \label{algorithm_core}
    \begin{algorithmic}
        \REQUIRE The tradeoff parameter $\sigma$, initialized $\{\alpha_i\}_{i=1}$ and dataset $X = [\bm x_1, \bm x_2, \cdots, \bm x_n]$.
        \REPEAT 
            \STATE Update $d_i$ by Eq. (\ref{d_update}).
            \STATE Update $\bm v_i$ by Eq. (\ref{v_update}). 
            \STATE Update $\bm m$ by Eq. (\ref{m_update}).
            \STATE Update $W$ by solving problem (\ref{W_obj}).
            \REPEAT
                \STATE Find a new $k$. \\
            \UNTIL{$k$ satisfies constraint (\ref{alpha_k_constraint}).}
            \STATE Update $\alpha_i$ by Eq. (\ref{alpha_update}).
        \UNTIL{convergence}
        \ENSURE Orthonormal basis $W$ of and translation vector $\bm m$.
    \end{algorithmic}
\end{algorithm}

\subsection{Optimization of EPCA}

The key part of EPCA is how to solve its objective efficiently, 
which is shown as follows:

\textbf{Evaluate $d_i$}: According to Algorithm \ref{algorithm_adaptive}, the point-wise $\sigma$-loss can be optimized by solving a weighted $\ell_2$-norm problem. Specifically speaking, $d_i$ is evaluated by
\begin{equation} \label{d_update}
	d_i = (1 + \sigma)\frac{\|\bm x_i - \bm m - W \bm v_i\|_2 + 2\sigma}{2(\|\bm x_i - \bm m - W \bm v_i\|_2 + \sigma)^2} ,
\end{equation}
and problem (\ref{obj}) is converted into 
\begin{equation} \label{obj_with_d}
	\min \limits_{\bm m, V, W^T W = I} \sum \limits_{i=1}^n \frac{d_i}{1-\alpha_i} \|\bm x_i - \bm m - W \bm v_i\|_2^2 .
\end{equation}

\textbf{Optimize $V$}: 
Let $\mathcal J_i = \frac{d_i}{1 - \alpha_i} \|\bm x_i - \bm m - W \bm v_i\|_2^2$. 
Take the derivative regarding $\bm v_i$ and set it to 0, 
\begin{equation}
	\frac{\partial \mathcal J_i}{\partial \bm v_i} = \frac{2 d_i}{1 - \alpha_i} (W^T W \bm v_i - W^T (\bm x_i - \bm m)) = 0.
\end{equation}
Therefore, the optimal $\bm v_i$ can be calculated as 
\begin{equation} \label{v_update}
	\bm v_i = W^T( \bm x_i - \bm m) .
\end{equation}

\textbf{Optimize $\bm m$}: Since $\bm v_i$ can be calculated directly 
and $\alpha_i$ is viewed as constant temporarily, 
problem (\ref{obj_with_d}) can be rewritten as  
\begin{equation}
    \label{opt_obj_fix_w}
    \min \limits_{W^T W = I, \bm m} \sum \limits_{i=1}^n \frac{d_i}{1 - \alpha_i} \|(I - WW^T)(\bm x_i - \bm m)\|_2^2 ,
\end{equation}
To keep notations uncluttered, let $\eta_i = \frac{d_i}{1-\alpha_i}$. 
Taking the derivative w.r.t. $\bm m$ and setting it to 0, we obtain
\begin{equation}
    \sum \limits_{i=1}^n 2 \eta_i (I - W W^T)^2 \bm m - 2 \eta_i (I - W W^T)^2 \bm x_i = 0 .
\end{equation}

Note that $(I - W W^T)$ is idempotent, \textit{i.e.}, 
$(I - W W^T)^2 = (I - W W^T)$. 
Let 
\begin{equation} \label{m_update}
    \bm \mu = \frac{\sum \limits_{i=1}^n \eta_i \bm x_i}{\sum \limits_{i=1}^n \eta_i},
\end{equation}
and we can derive that 
\begin{equation} \label{eq_mean}
	(I - W W^T) \bm m = (I - W W^T) \bm \mu .
\end{equation}
The following theorem gives the formulation of its solutions.
\begin{myTheo} \label{theorem_solution_m}
    The solution set of $\bm m$ can be formulated as 
    \begin{equation}
        \{\bm m | \bm m = {\bm \mu} + W \bm \beta\},
    \end{equation}
    where $\bm \beta \in \mathbb R^{c}$ denotes an arbitrary vector and will not 
    affect the value of the objective function.
\end{myTheo}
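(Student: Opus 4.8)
The plan is to treat (\ref{eq_mean}) as a linear equation in $\bm m$, describe its complete solution set, and then verify that moving within this set leaves the objective (\ref{opt_obj_fix_w}) unchanged. First I would rewrite (\ref{eq_mean}) in the homogeneous form $(I - W W^T)(\bm m - \bm \mu) = 0$, so that the admissible $\bm m$ are exactly those vectors whose difference from $\bm \mu$ lies in the kernel of $I - W W^T$.

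The key structural fact I would establish is that, because $W^T W = I$, the matrix $W W^T$ is the orthogonal projector onto $\mathcal R(W)$, and hence $I - W W^T$ is the complementary projector whose kernel is precisely $\mathcal R(W)$. Consequently $\bm m - \bm \mu \in \mathcal R(W)$, which means $\bm m - \bm \mu = W \bm \beta$ for some $\bm \beta \in \mathbb R^{c}$; this yields the claimed parametrization $\bm m = \bm \mu + W \bm \beta$. For the converse inclusion I would note that any such $\bm m$ satisfies (\ref{eq_mean}), since $(I - W W^T) W = W - W (W^T W) = 0$, so the two sets coincide.

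To show that $\bm \beta$ does not affect the objective, I would substitute $\bm m = \bm \mu + W \bm \beta$ into each residual appearing in (\ref{opt_obj_fix_w}) and invoke the same identity $(I - W W^T) W = 0$ to cancel the $\bm \beta$ contribution, obtaining $(I - W W^T)(\bm x_i - \bm \mu - W \bm \beta) = (I - W W^T)(\bm x_i - \bm \mu)$. Since every summand of $\sum_i \eta_i \|(I - W W^T)(\bm x_i - \bm m)\|_2^2$ thereby becomes independent of $\bm \beta$, the whole objective is independent of $\bm \beta$ as asserted.

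The argument is essentially elementary linear algebra, and I do not anticipate a genuine obstacle; the only point requiring care is the identification $\ker(I - W W^T) = \mathcal R(W)$, which rests squarely on the orthonormality constraint $W^T W = I$. Were $W W^T$ merely idempotent without orthonormal columns, this characterization could fail, so I would make the role of $W^T W = I$ explicit at the step where both the kernel description and the cancellation $(I - W W^T) W = 0$ are used.
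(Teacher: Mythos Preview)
Your proposal is correct and follows essentially the same approach as the paper: both arguments identify $\ker(I - W W^T) = \mathcal R(W)$ via the identity $(I - W W^T) W = 0$ together with orthonormality $W^T W = I$, and then add the particular solution $\bm \mu$. The paper phrases the kernel computation through an explicit eigendecomposition using $W_+ = [W,\, W^\bot]$, whereas you invoke the projector interpretation directly; your version is also slightly more complete in that you explicitly verify the claim that $\bm \beta$ does not affect the objective, which the paper's proof states but does not spell out.
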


In practice, $W$ is unavailable before problem (\ref{obj_PCA}) is 
completely solved. 
Since any $\bm \beta$ results in the same objective value, 
we can simply use the particular solution (\textit{i.e.}, 
$\bm \beta = 0$), $\bm m = \bm \mu$.

\textbf{Optimize $W$}:
Based on the closed-form solution of $\bm v_i$, 
Eq. (\ref{obj_with_d}) can be rewritten as
\begin{equation}
    \begin{split}
        & \sum \limits_{i=1}^n \frac{d_i}{1 - \alpha_i} \|(I - W W^T)(\bm x_i - \bm m )\|_F^2 \\
        = ~& \sum \limits_{i=1}^n \eta_i [{\rm tr} (\bar{\bm x}_i^T \bar{\bm x}_i) - 2 {\rm tr}(\bar{\bm x}_i^T W W^T \bar{\bm x}_i) \\
            & ~~~~~~~~~~~~~ + {\rm tr}(W W^T \bar{\bm x}_i \bar{\bm x}_i^T W W^T)]\\
        = ~ & \sum \limits_{i=1}^n \eta_i [{\rm tr} (\bar{\bm x}_i^T \bar{\bm x}_i) - {\rm tr}(\bar{\bm x}_i^T W W^T \bar{\bm x}_i)].\\
    \end{split}
\end{equation}
where $\bar{\bm x}_i = \bm x_i - \bm m $. As $\bm m$ and $\alpha_i$ are fixed
as constants, problem (\ref{obj_with_d}) is equivalent to
\begin{equation}
    \label{W_obj}
        \max \limits_{W^T W = I} \sum \limits_{i=1}^n {\rm tr}(W^T Q W) ,
\end{equation}
where 
\begin{equation}
    Q = \sum \limits_{i=1}^n \eta_i \bar{\bm x}_i \bar{\bm x}_i^T = \sum \limits_{i=1}^n \eta_i (\bm x_i - \bm m) (\bm x_i - \bm m)^T .
\end{equation}
Consequently, the optimal $W$ can be obtained by calculating $c$ eigenvectors of $c$ largest eigenvalues. 

\textbf{Update $\alpha_i$}: Substitute $g_i = \|(I - WW^T)(\bm x_i - \bm m)\|_{\hat \sigma}$ into Eq. (\ref{w_update}) and we have
\begin{equation}
    \label{alpha_update}
    \alpha_i = (1 - \frac{(k - 1) \sqrt{g_i}}{\sum \limits_{j=1}^k g_i}) ,
\end{equation}
where $k$ has to satisfy 
\begin{equation}
    \label{alpha_k_constraint}
    \frac{\sum \limits_{i=1}^k \sqrt{g_i}}{\sqrt{g_{k+1}}} + 1 
    \leq k < \frac{\sum \limits_{i=1}^k \sqrt{g_i}}{\sqrt{g_{k}}} + 1 .
\end{equation}

The optimization procedure of problem (\ref{obj}) is summarized in Algorithm 
\ref{algorithm_core}. 
In our implementation, the strategy to find $k$ is to search it from 1 to 
$n$ simply. 

\section{Proofs of Theorems} \label{section_proof}
\subsection{Proof of Theorem \ref{uniqueness}}
\begin{proof}
    Without loss of generality, suppose that $0 \leq f_1 \leq f_2 \leq \cdots \leq f_n$. Let $\mathcal J = \sum \limits_{i=1}^n \frac{f_i}{1-w_i}$. As $\sum \limits_{i=1}^n w_i = 1$, we have 
    \begin{equation}
        \mathcal J = \frac{f_1}{\sum \limits_{i=2}^n w_i} + \sum \limits_{i=2}^n \frac{f_i}{1 - w_i} .
    \end{equation}
    Take derivative of $\mathcal J$ w.r.t. $w_i$
    \begin{equation}
        \frac{\partial \mathcal J}{\partial w_i} = - \frac{f_1}{(\sum \limits_{i=2}^n w_i)^2} + \frac{f_i}{(1 - w_i)^2} ,
    \end{equation}
    and furthermore, the Hessian matrix is given as
    \begin{equation}
        H_{ij} = \frac{\partial^2 \mathcal J}{\partial w_i \partial w_j} = 
        \left \{ 
        \begin{array}{l l}
            \frac{2 f_1}{(\sum \limits_{i=2}^n w_i)^3} + \frac{2 f_i}{(1 - w_i)^3}, & i = j ,\\
            \frac{2 f_1}{(\sum \limits_{i=2}^n w_i)^3}, & i \neq j .
        \end{array}
        \right.
    \end{equation}
    
    Accordingly, the Hessian matrix $H$ can be represented as 
    \begin{equation}
        H = \frac{2 f_1}{(\sum \limits_{i=2}^n w_i)^3} \textbf{1} \textbf{1}^T + {\rm diag}(\frac{2 f_i}{(1 - w_i)^3}) .
    \end{equation}
    
    It is not hard to realize that $\forall \bm x \in \mathbb R^n$, we have 
    $\bm x^T H \bm x >0$. 
    Clearly, the problem (\ref{co_robust_framework}) is convex and the 
    optimal $\bm w$ is unique. Note that the function
    \begin{equation}
        g_i(\lambda) = w_i= (1 - \sqrt{\frac{f_i}{\lambda}})_+,
    \end{equation}
    will not decrease with $\lambda$ goes larger. 
    As $k$ is at least 2 and $\sum \limits_{i=1}^n w_i = 1$, 
    for any valid $\{w_i\}_{i=1}^n$ there is only one $\lambda$. 
    Hence, $\lambda$ is unique as well. Utilize the fact
    \begin{equation}
        \sqrt{f_k} < \sqrt{\lambda} \leq \sqrt{f_{k+1}} .
    \end{equation}
    and the theorem is proved.
\end{proof}

\subsection{Proof of Theorem \ref{convergence}}
\begin{myLemma}
    \label{lemma1}
    If arbitrary $\bm x$ and $\bm y$ have the same dimension, the following inequality holds:
    \rm{
    \begin{equation}
        \begin{split}
            \frac{\|\bm{x}\|_2^2}{\|\bm{x}\|_2+\sigma} - &\frac{\|\bm{y}\|_2 + 2\sigma}{2(\|\bm{y}\|_2 + \sigma)^2}\|\bm{x}\|_2^2  \\
            &\leq \frac{\|\bm{y}\|_2^2}{\|\bm{y}\|_2 + \sigma} - \frac{\|\bm{y}\|_2 + 2\sigma}{2(\|\bm{y}\|_2 + \sigma)^2}\|\bm{y}\|_2^2 .
        \end{split}
    \end{equation}
    }
\end{myLemma}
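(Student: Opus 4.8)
The plan is to reduce the inequality to a one-dimensional statement. Since every term depends on $\bm{x}$ and $\bm{y}$ only through their $\ell_2$-norms, I set $a = \|\bm{x}\|_2 \ge 0$ and $b = \|\bm{y}\|_2 \ge 0$, and treat $b$ and $\sigma > 0$ as fixed parameters. The claim then becomes $\phi(a) \le \phi(b)$ for all $a \ge 0$, where
\[
\phi(a) = \frac{a^2}{a+\sigma} - \frac{b+2\sigma}{2(b+\sigma)^2}\, a^2 .
\]
In other words, it suffices to show that $a = b$ is a global maximizer of $\phi$ on $[0,\infty)$.

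The cleanest route I would take is to recognize the right-hand side as the tangent-line (supporting-hyperplane) bound of a concave function. Writing $u = a^2$ and $L(u) = u/(\sqrt{u}+\sigma)$, a short computation gives $L'(u) = (\sqrt{u}+2\sigma)/[2(\sqrt{u}+\sigma)^2]$, so the coefficient $(b+2\sigma)/[2(b+\sigma)^2]$ appearing in the lemma is exactly $L'(b^2)$. Consequently the inequality is precisely the concavity estimate $L(a^2) - L'(b^2)\,a^2 \le L(b^2) - L'(b^2)\,b^2$, i.e.\ the statement that the graph of $L$ lies below its tangent at $u = b^2$. Hence it is enough to prove that $L$ is concave on $[0,\infty)$.

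To establish concavity I would compute $L''(u)$ and show it is negative, which reduces to the sign of the derivative of the auxiliary function $\psi(t) = (t+2\sigma)/(t+\sigma)^2$. The key calculation is
\[
\psi'(t) = \frac{(t+\sigma) - 2(t+2\sigma)}{(t+\sigma)^3} = \frac{-(t+3\sigma)}{(t+\sigma)^3} < 0
\]
for all $t \ge 0$, so $\psi$ is strictly decreasing; composing with $t = \sqrt{u}$ yields $L''(u) < 0$. Equivalently, and perhaps more directly, I could differentiate $\phi$ itself and factor $\phi'(a) = a\,[\psi(a) - \psi(b)]$; the monotonicity of $\psi$ then shows $\phi' \ge 0$ on $[0,b]$ and $\phi' \le 0$ on $[b,\infty)$, so $\phi$ rises to its maximum at $a=b$ and falls thereafter, which gives the claim.

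The calculation is elementary, so the main obstacle is bookkeeping rather than anything conceptual: getting the algebra of $\psi'(t)$ (and hence the sign of $L''$) correct, and ensuring the nonnegative factor $a$ in $\phi'(a) = a[\psi(a)-\psi(b)]$ is handled so that the sign of $\phi'$ is controlled on all of $[0,\infty)$, including the boundary case $b = 0$ (where $\phi$ is maximized at $a=0$). Once the monotonicity of $\psi$ is in hand, both the concavity argument and the direct derivative argument close immediately.
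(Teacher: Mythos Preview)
Your proof is correct, and both of the routes you outline---the concavity/tangent-line interpretation of $L(u)=u/(\sqrt{u}+\sigma)$ and the direct analysis of $\phi'(a)=a[\psi(a)-\psi(b)]$ via the strict monotonicity of $\psi(t)=(t+2\sigma)/(t+\sigma)^2$---close the argument cleanly.

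The paper, however, proceeds quite differently: it gives a purely algebraic derivation, starting from the nonnegative factored quantity $(a-b)^2(ab+2\sigma a+\sigma b)\ge 0$ (with $a=\|\bm x\|_2$, $b=\|\bm y\|_2$) and expanding through a chain of implications until the stated inequality appears. There is no calculus and no explicit mention of concavity. Your approach is more conceptual: it identifies the inequality as exactly the statement that the surrogate in Algorithm~\ref{algorithm_adaptive} is a tangent majorant of the $\sigma$-loss, which is the structural reason Theorem~\ref{convergence} works in the first place. It also isolates the single nontrivial computation ($\psi'(t)=-(t+3\sigma)/(t+\sigma)^3<0$) and would generalize more readily to other losses of the same form. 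The paper's approach, by contrast, requires no differentiability assumptions and is entirely elementary, at the cost of needing to spot the correct initial factorization and carry out several lines of algebra whose direction is not obvious a priori.
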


\begin{proof} For any $\bm{x}$, $\bm{y} \in \mathbb R^d$, 
        \begin{align*} 
	& (\|\bm{x}\|_2 - \|\bm{y}\|_2)^2 (\|\bm{x}\|_2\|\bm{y}\|_2 + 2\sigma \|\bm{x}\|_2 + \sigma \|\bm{y}\|_2) \geq 0 \\
	\Rightarrow & 2\|\bm{x}\|_2^2\|\bm{y}\|_2^2 + 3 \sigma \|\bm{x}\|_2^2 \|\bm{y}\|_2 \leq \|\bm{x}\|_2\|\bm{y}\|_2 \|\bm{y}\|_2^2 + \\ 
    & \|\bm{x}\|_2 \|\bm{y}\|_2 \|\bm{x}\|_2^2 + 2\sigma \|\bm{x}\|_2 \|\bm{x}\|_2^2 + \sigma \|\bm{y}\|_2 \|\bm{y}\|_2^2 \\
    \Rightarrow & 2\|\bm{x}\|_2^2 (\|\bm{y}\|_2 + \sigma)^2 \\
	& \leq (|\bm{y}| \|\bm{y}\|_2^2 + \|\bm{y}\|_2 \|\bm{x}\|_2^2 + 2\sigma \|\bm{x}\|_2^2)(\|\bm{x}\|_2 + \sigma) \\
	\Rightarrow & \frac{\|\bm{x}\|_2^2}{\|\bm{x}\|_2 + \sigma} \leq \frac{\|\bm{y}\|_2 \|\bm{y}\|_2^2 + \|\bm{y}\|_2\|\bm{x}\|_2^2 + 2\sigma \|\bm{x}\|_2^2}{2(\|\bm{y}\| + \sigma)^2} \\
	\Rightarrow & \frac{\|\bm{x}\|_2^2}{\|\bm{x}\|_2 + \sigma} - \frac{\|\bm{y}\|_2 + 2\sigma}{2(\|\bm{y}\|_2 + \sigma)^2} \|\bm{x}\|_2^2 \leq \frac{\|\bm{y}\|_2\|\bm{y}\|_2^2}{2(\|\bm{y}\|_2 + \sigma)^2} \\
	\Rightarrow & \frac{\|\bm{x}\|_2^2}{\|\bm{x}\|_2 + \sigma} - \frac{\|\bm{y}\|_2 + 2\sigma}{2(\|\bm{y}\|_2+\sigma)^2} \|\bm{x}\|_2^2 \\
	& \leq \frac{\|\bm{y}\|_2^2}{\|\bm{y}\|_2 + \sigma} - \frac{\|\bm{y}\|_2 + 2\sigma}{2(\|\bm{y}\|_2+\sigma)^2} \|\bm{x}\|_2^2 \\
	\Rightarrow & \frac{\|\bm{x}\|_2^2}{\|\bm{x}\|_2 + \sigma} \leq \frac{\|\bm{y}\|_2^2}{\|\bm{y}\|_2 + \sigma} .
    \end{align*}
	which completes the proof.
\end{proof}

With the help of the above lemma, we give the proof of Theorem 
\ref{convergence} as follows.

\begin{proof} [Proof of Theorem \ref{convergence}]

    Assume that we fix $d_i$ and calculate $\widetilde{\bm{\theta}}$ after step 2 of Algorithm \ref{algorithm_adaptive}, then we have:
    \begin{equation}
        \sum \limits_i d_i s_i\|\bm g(\bm x_i; \widetilde{\bm{\theta}})\|_2^2 \leq \sum \limits_i d_i s_i \|\bm g(\bm{x}_i; \bm \theta)\|_2^2 .
    \end{equation}
    Due to the definition of $d_i$ according to Eq.(\ref{opt_definition_of_d}), we have:
    \begin{equation} \label{opt:inequality by x}
        \begin{split}
            & (1+\sigma) \sum \limits_i \frac{\|\bm g(\bm{x}_i; \bm \theta)\|_2 + 2\sigma}{2(\|\bm g(\bm{x}_i; \bm \theta)\|_2 + \sigma)^2} s_i \|\bm g(\bm x_i; \widetilde{\bm{\theta}})\|_2^2 \\
            \leq & (1+\sigma) \sum \limits_i \frac{\|\bm g(\bm{x}_i; \bm \theta)\|_2 + 2\sigma}{2(\|\bm g(\bm x_i; \bm \theta)\|_2 + \sigma)^2} s_i \|\bm g(\bm x_i; \bm \theta)\|_2^2 .
        \end{split}
    \end{equation}
    And based on Lemma \ref{lemma1}, we have following inequality:
    \begin{equation}\label{opt:added inequality}
        \begin{split}
            &\frac{\|\bm g(\bm x_i;\widetilde{\bm{\theta}})\|_2^2}{\|\bm g(\bm x_i; \widetilde{\bm{\theta}})\|_2 + \sigma} - \frac{\|\bm g(\bm{x}_i; \bm \theta)\|_2 + 2\sigma}{2(\|\bm g(\bm x_i; \bm \theta)\|_2 + \sigma)^2}\|\bm g(\bm x_i;\widetilde{\bm{\theta}})\|_2^2 \\
            \leq & \frac{\|\bm g(\bm{x}_i; \bm \theta)\|_2^2}{\|\bm g(\bm{x}_i; \bm \theta)\|_2 + \sigma} - \frac{\|\bm g(\bm{x}_i; \bm \theta)\|_2 + 2\sigma}{2(\|\bm g(\bm x_i; \bm \theta)\|_2 + \sigma)^2}\|\bm g(\bm{x}_i; \bm \theta)\|_2^2 .
        \end{split}
    \end{equation}
    Through multiplying $(1+\sigma) s_i$ on both sides of (\ref{opt:added inequality}) and adding to (\ref{opt:inequality by x}), we thus have
    \begin{equation} \label{opt:inequality optimal x}
        \begin{split}
            \sum \limits_i s_i \frac{(1+\sigma) \|\bm g(\bm x_i; \widetilde{\bm{\theta}})\|_2^2}{\|\bm g(\bm x_i; \widetilde{\bm{\theta}})\|_2 + \sigma} 
            \leq & \sum \limits_i s_i \frac{(1+\sigma) \|\bm g(\bm{x}_i; \bm \theta)\|_2^2}{\|\bm g(\bm{x}_i; \bm \theta)\|_2+\sigma} .
        \end{split}
    \end{equation}
    Therefore, Algorithm \ref{algorithm_adaptive} decreases Eq.(\ref{opt_general_adaptive_loss}) monotonically in each iteration.
\end{proof}

\begin{table*}[t]
    \centering
    \caption{Reconstruction Error}
    \label{table_error}
    \renewcommand\arraystretch{1.2}
    \setlength{\tabcolsep}{3mm}
    \begin{tabular}{c c c c c c c c c c c}
        \hline

        \hline
            Dataset & $c$ & PCA & L1PCA & L1PCA-NG & R1PCA & RSPCA & L$_p$PCA & PCA-OM & PCA-AN & EPCA\\
        \hline
        \hline
            \multirow{3}{*}{JAFFE $(\times 10^7)$} & 10 & 6.92 & 7.16 & 7.29 & \underline{6.91} & 8.05 & 7.05 & 6.93 & 6.96 & \textbf{6.89}\\
            & 30 & 4.53 & 4.60 & 5.06 & 4.13 & 5.49 & 4.52 & {4.11} & \underline{4.11} & \textbf{4.11}\\
            & 50 & 4.32 & 4.08 & 4.54 & 3.94 & 5.00 & 4.01 & {3.93} & \underline{3.88} & \textbf{3.83}\\
        \hline
            \multirow{3}{*}{YALE $(\times 10^7)$} & 10 & 16.39 & 16.86 & 17.74 & 16.20 & 19.04 & 16.50 & {16.20} & \underline{16.42} & \textbf{16.19}\\
            & 30 & 10.67 & 10.77 & 11.73 & \underline{9.43} & 13.76 & 10.52 & 9.45 & 9.51 & \textbf{9.36}\\
            & 50 & 9.92 & {9.43} & 10.13 & 9.64 & 12.88 & \underline{9.29} & 9.65 & \textbf{9.14} & {9.60}\\
        \hline
            \multirow{3}{*}{ORL $(\times 10^8)$} & 10 & 1.81 & 1.82 & 1.88 & 1.71 & 2.09 & 1.81 & {1.70} & \underline{1.69} & \textbf{1.68}\\
            & 30 & 1.39 & 1.29 & 1.42 & \underline{1.11} & 1.59 & 1.27 & {1.11} & \underline{1.13} & \textbf{1.11}\\
            & 50 & 1.44 & 1.23 & 1.36 & 1.13 & 1.58 & 1.25 & \underline{1.12} & 1.17 & \textbf{1.11}\\
        \hline
            \multirow{3}{*}{COIL20 $(\times 10^9)$} & 10 & \underline{2.49} & 2.53 & 2.61 & 2.51 & 2.94 & 2.51 & 2.51 & 2.52 & \textbf{2.48}\\
            & 30 & 1.60 & 1.63 & 1.68 & {1.57} & 2.27 & 1.62 & \underline{1.57} & 1.58 & \textbf{1.56}\\
            & 50 & 1.35 & 1.38 & 1.43 & {1.32} & 2.21 & 1.36 & 1.32 & \textbf{1.30} & \underline{1.32}\\
        \hline
            \multirow{3}{*}{UMIST $(\times 10^8)$} & 10 & 7.65 & 7.93 & 8.17 & 7.34 & 8.44 & 7.82 & {7.33} & \underline{7.34} & \textbf{7.32}\\
            & 30 & 7.92 & 7.75 & 8.06 & {6.77} & 8.15 & 7.70 & 6.78 & \textbf{6.28} & \underline{6.76}\\
            & 50 & 8.93 & 8.27 & 8.66 & 7.64 & 8.86 & 8.39 & \underline{7.61} & \textbf{7.19} & \underline{7.60}\\
        \hline

        \hline
    \end{tabular}
\end{table*}

\begin{table}[t]
    \centering
    \renewcommand\arraystretch{1.4}
    \setlength{\tabcolsep}{3mm}
    \caption{Information of Datasets}
    \label{datasets}
    \begin{tabular} {c c c c}
        \hline

        \hline
        Dataset & Size & Dimensionality & Class \\
        \hline
        \hline
        JAFFE & 213 & 1024 & 10 \\

        YALE & 165 & 1024 & 15 \\

        ORL & 400 & 1024 & 40 \\

        COIL20 & 1440 & 1024 & 20 \\

        UMIST & 575 & 1024 & 20 \\
        \hline

        \hline
    \end{tabular}
\end{table}

\begin{table*}[t]
    \centering
    \caption{Clustering Accuracy(\%)}
    \label{table_acc}
    \renewcommand\arraystretch{1.2}
    \setlength{\tabcolsep}{3mm}
    \begin{tabular}{c c c c c c c c c c c}
        \hline
        
        \hline
            Dataset & $c$ & PCA & L1PCA & L1PCA-NG & R1PCA & RSPCA & L$_p$PCA & PCA-OM & PCA-AN & EPCA \\
        \hline
        \hline
            \multirow{3}{*}{JAFFE} & 10 & 86.85 & 83.57 & 77.72 & 79.81 & 69.01 & \underline{86.85} & 83.57 & 80.28 & \textbf{90.14} \\ 
            & 30 & \underline{94.37} & 86.38 & 74.23 & 90.14 & 81.22 & 92.02 & 89.67 & 83.57 & \textbf{97.65} \\ 
            & 50 & 88.73 & 83.57 & 73.57 & 90.14 & 73.24 & 77.46 & {91.08} & \underline{94.37} & \textbf{96.24} \\ 
        \hline
            \multirow{3}{*}{YALE} & 10 & \underline{43.03} & 41.82 & 38.65 & 42.42 & 40.61 & 40.00 & 37.58 & 40.61 & \textbf{46.06} \\ 
            & 30 & 38.79 & 41.21 & 38.51 & \underline{44.85} & 40.00 & 43.03 & 36.36 & 38.79 & \textbf{46.67} \\ 
            & 50 & 33.94 & 38.79 & 38.92 & \underline{43.64} & 41.82 & 38.79 & 41.21 & 43.03 & \textbf{44.24} \\ 
        \hline 
            \multirow{3}{*}{ORL} & 10 & 51.50 & 51.00 & 48.88 & \underline{52.75} & 43.25 & 50.50 & 50.00 & 52.50 & \textbf{55.75} \\ 
            & 30 & 53.75 & 55.75 & 53.09 & 49.75 & 49.75 & 55.50 & \underline{59.25} & 56.50 & \textbf{62.00} \\ 
            & 50 & 47.25 & 53.25 & 53.07 & 52.00 & 48.75 & {55.00} & 54.00 & \underline{57.00} & \textbf{57.25} \\ 
        \hline
            \multirow{3}{*}{COIL20} & 10 & 62.08 & \underline{63.89} & 61.88 & 59.65 & 62.78 & 60.97 & 60.90 & 61.94 & \textbf{64.38} \\
            & 30 & {63.61} & 59.51 & 61.32 & 58.13 & 58.13 & 63.26 & 62.15 & \underline{63.68} & \textbf{65.14} \\
            & 50 & \underline{65.56} & 60.62 & 59.98 & 63.75 & 61.88 & 60.83 & 65.28 & 63.96 & \textbf{66.39} \\ 

        \hline
            \multirow{3}{*}{UMIST}  
            & 10 & 28.00 & 29.74 & 30.85 & 28.52 & \underline{30.43} & 28.00 & 28.70 & 29.22 & \textbf{31.13} \\ 
            & 30 & \underline{31.83} & 29.57 & 30.15 & 30.26 & 30.43 & 31.13 & 30.26 & 27.89 & \textbf{33.39} \\ 
            & 50 & 29.91 & \underline{31.48} & 30.06 & 30.09 & 30.26 & 29.22 & 29.74 & 29.74 & \textbf{32.17} \\ 
        \hline
        
        \hline
    \end{tabular}
\end{table*}

\subsection{Proof of Theorem \ref{theorem_rotational_invariant}}
\begin{proof}
    If $R \in \mathbb{R}^{d \times d}$ is an orthogonal matrix, then for any 
    vector $\bm \beta \in \mathbb{R}^d$,
    \begin{equation}
        \begin{split}
            & \|R \bm x\|_2^2 = {\rm tr} (\bm x^T R ^T R \bm x) = {\rm tr}(\bm x^T x) = \|\bm x\|_2^2 \\
        \end{split}
    \end{equation}
    Accordingly, we have 
    \begin{equation}
        \begin{split}
            & \sum \limits_{i=1}^n \frac{1}{1-\alpha_i} \|\bm x_i - \bm m - W \bm v_i \|_{\hat \sigma}, \\
            = & \sum \limits_{i=1}^n \frac{1}{1-\alpha_i} \cdot \frac{(1+\sigma)\|\bm x_i - \bm m - W \bm v_i\|_2^2}{\|\bm x_i - \bm m - W \bm v_i \|_2 + \sigma} \\
            = & \sum \limits_{i=1}^n \frac{1}{1-\alpha_i} \|R \bm x_i - R \bm m - R W \bm v_i \|_{\hat \sigma} .
        \end{split}
    \end{equation}
    Therefore, after a rotational operation for $X$, the optimal low-dimensional 
    representation, $V$, of the original features is also the optimal solution
    of the transformed features. 
\end{proof}

\subsection{Proof of Theorem \ref{theorem_solution_m}}
\begin{proof}
	Clearly, $\bm \mu$ is a particular solution for problem (\ref{obj_PCA}). Let $W^\bot \in \mathbb R^{d \times (d - c)}$ represent orthogonal complement of $\{\bm w_i\}_{i=1}^c$ and $W_+ = [W; W^\bot]$. Therefore, we have
	\begin{equation}
		W W^T = W_+ 
		\left [
		\begin{array}{c c}
			I & 0 \\
			0 & 0
		\end{array}
		\right ]
		W_+^T .
	\end{equation}
	Accordingly, the eigenvalues of $(I - W W^T)$ are all 0 and 1. More specifically, $c$ of them are 0, which means the rank of the nullspace of $(I - W W^T)$. Note that 
	\begin{equation}
		(I - W W^T) W = 0.
	\end{equation} 
	Therefore, the feasible solution of $\bm m$ in Eq. (\ref{eq_mean}) can be denoted by $\bm \mu + W \bm \beta$. 
\end{proof}

\section{Experiment}
In this section, the performance of our algorithms from both reconstruction errors and clustering accuracy obtained from reconstructed data is reported.

\begin{figure*}[t]
    \centering
    \subfigure[JAFFE]{
        \includegraphics[width=0.23\linewidth]{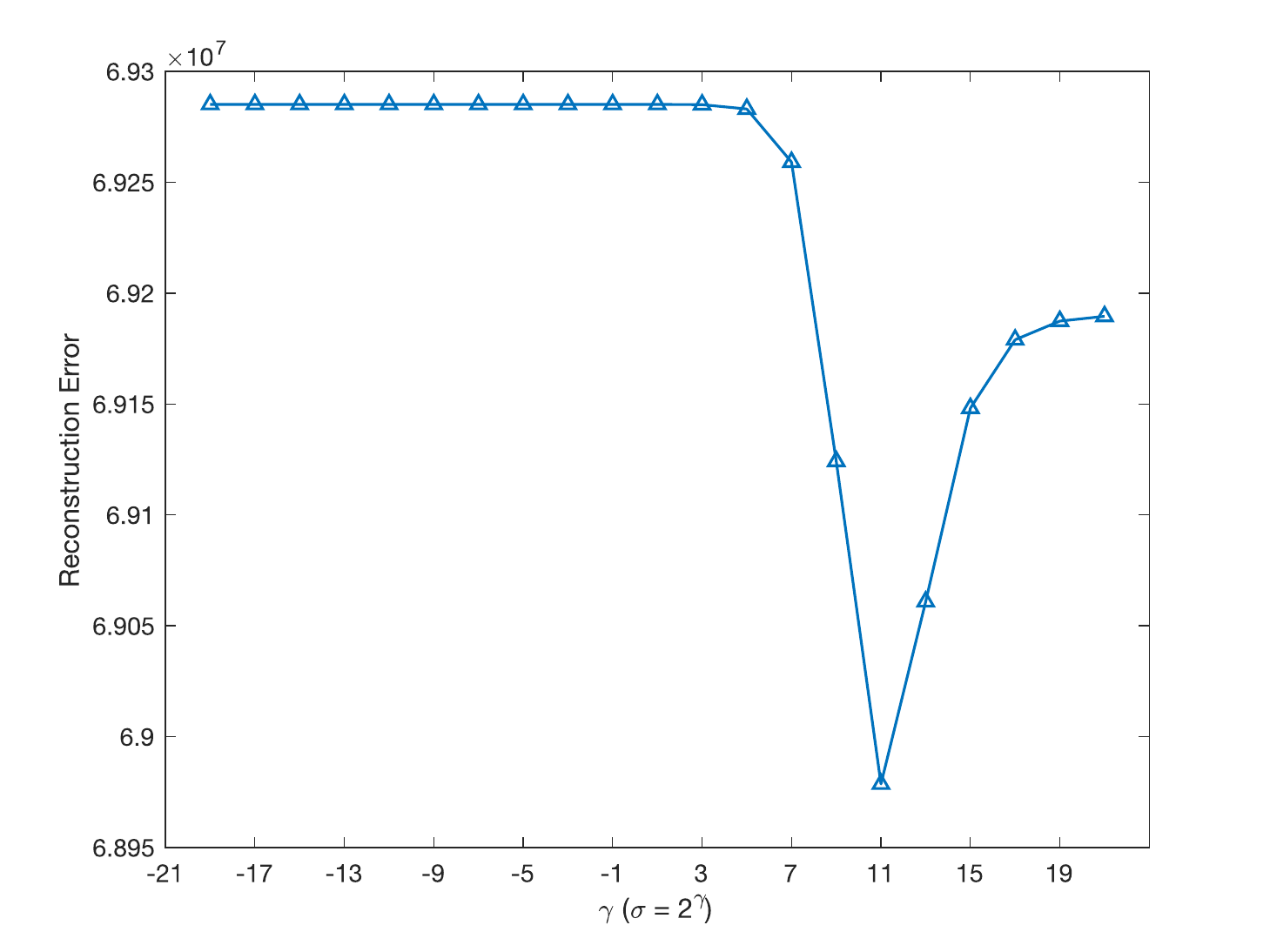}
    }
    \subfigure[YALE]{
        \includegraphics[width=0.23\linewidth]{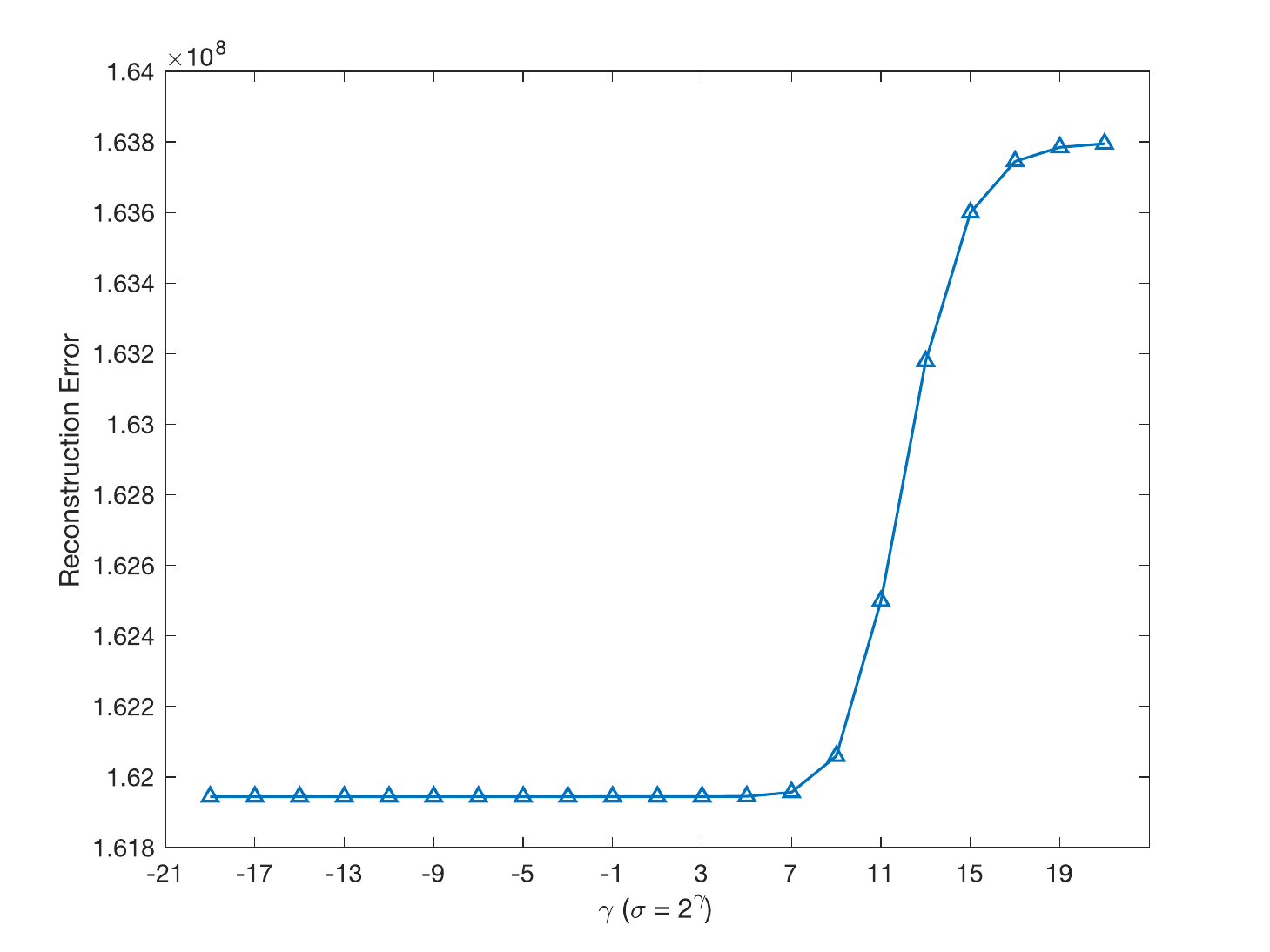}
    }
    \subfigure[ORL]{
        \includegraphics[width=0.23\linewidth]{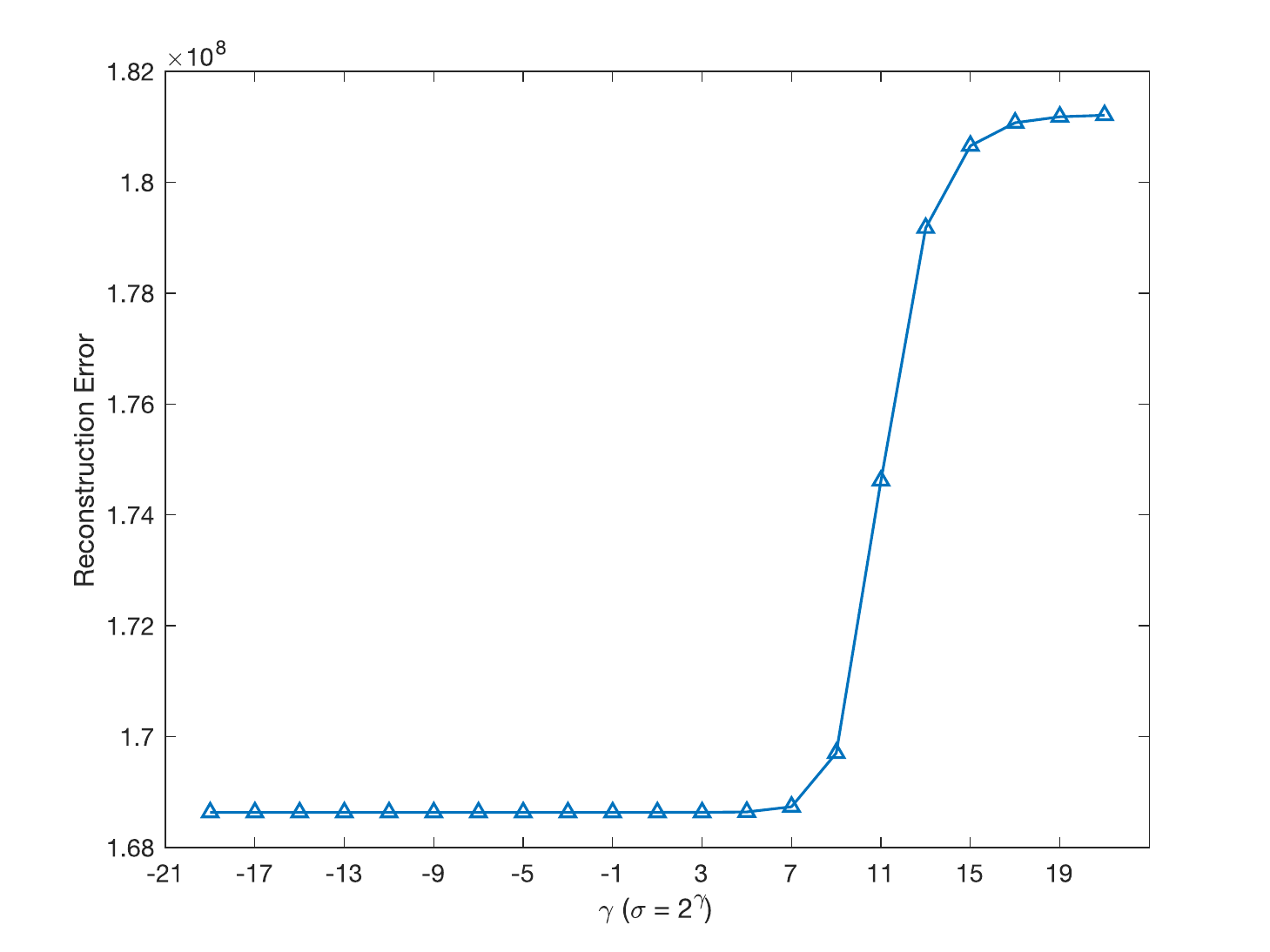}
    }
    \subfigure[COIL20]{
        \includegraphics[width=0.23\linewidth]{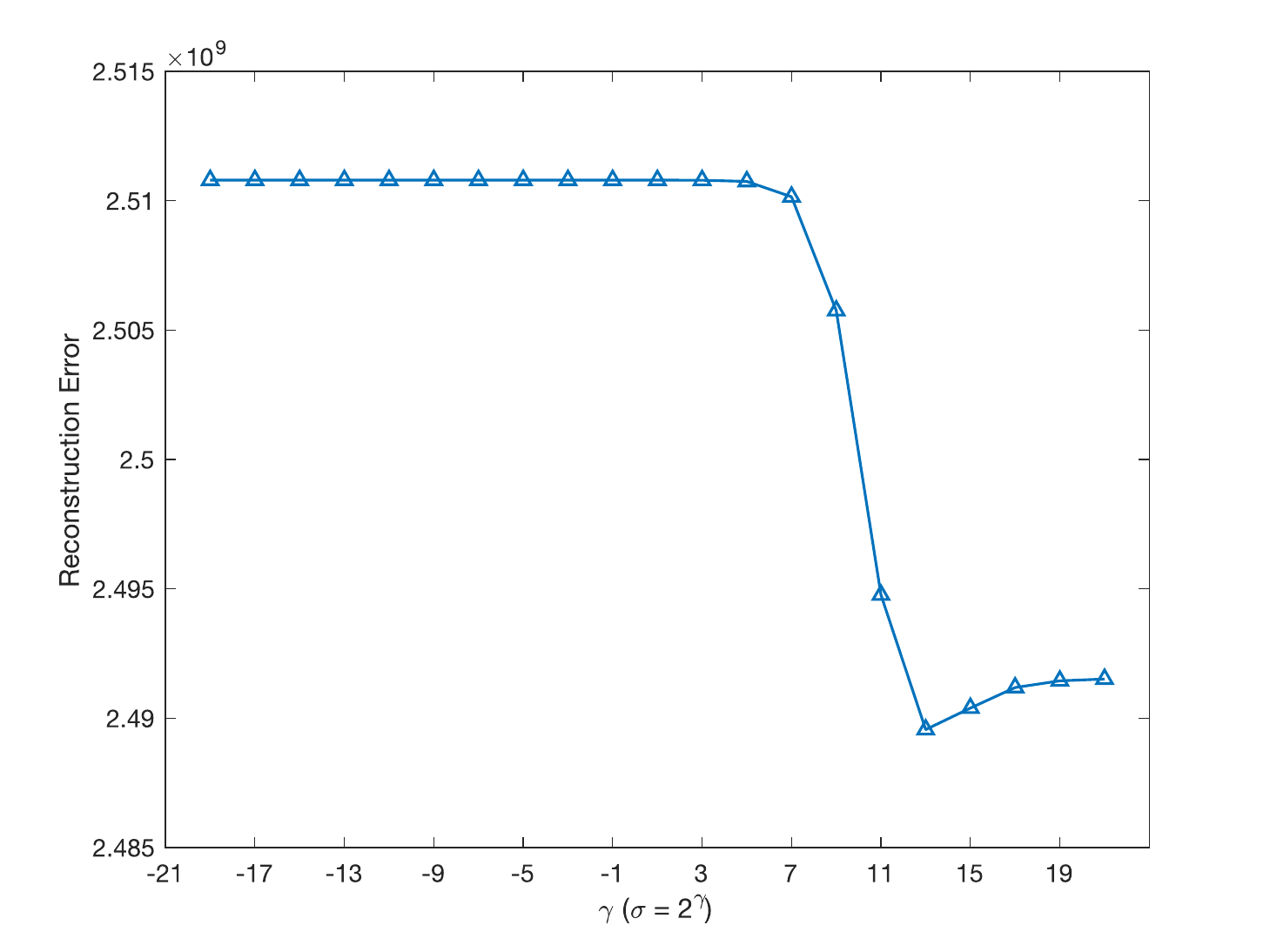}
    }
    \caption{Reconstruction errors on 4 different datasets when the rank of reconstruction matrix is fixed as 10. Then $x$-axis represents $\gamma$ where $\sigma = 2^\gamma$.}
    \label{figure_sigma}
\end{figure*}

\subsection{Experimental Setup}
The performance of proposed model is evaluated on five datasets including: JAFFE \cite{jaffe}, YALE \cite{yale}, ORL \cite{orl}, COIL20 \cite{COIL20} and UMIST \cite{umist}. All datasets are utilized to measure reconstruction errors and test clustering accuracy. The concrete information of these datasets are summarized in Table \ref{datasets}.

To demonstrate the superiority of our model, our model is compared with 8 
high related approaches including classical PCA \cite{pca}, robust PCA 
using $\ell_1$-norm (L1PCA), robust PCA using $\ell_p$-norm (L$p$PCA) \cite{lppca}, 
robust PCA using $\ell_1$-norm with non-greedy strategy 
(L1PCA-NG) \cite{l1pca-nongreedy}, robust PCA using $\ell_{2,1}$-norm 
(R1PCA) \cite{r1pca}, robust and sparse PCA (RSPCA) \cite{rspca}, robust 
PCA with optimal mean (PCA-OM) \cite{rpca-om}, and our conference work 
(PCA-AN) \cite{PCA-AN}. 
To ensure the fairness, the parameters of all baseline algorithms are 
determined via the similar approaches recorded in corresponding papers. 
In particular, we empirically fix $p = 1.2$, which can be viewed as a 
trade-off loss between $\ell_1$-norm and $\ell_2$-norm in L$p$-PCA.

To test the ability to reconstruct data of various algorithms, all datasets 
are polluted by the following strategy: 
(1) sample 20\% data points randomly from all datasets; 
(2) choose 20\% features randomly and then reset them by random values. 
All algorithms use the occluded data as input to train projection matrix $W$ and mean vector $m$.
All codes are implemented by Matlab R2019a.

\subsection{Reconstruction Errors}
The reconstruction errors are calculated by 
\begin{equation}
    \varepsilon = \|X - \bm m \textbf{1}^T - W W^T (X^{(occ)} - \bm m \textbf{1}^T)\|_F^2 ,
\end{equation}
where $W$ and $m$ are learned from occluded data, $X$ is the unpolluted data, 
and $X^{(occ)}$ is the occluded data. 
Note that in practical applications, 
the data to process can be viewed as $X^{(occ)}$ 
while $X$ is regarded as the underlying and unknown data without noises. 

Table \ref{table_error} illustrates results of different $c$ on all datasets. 
From Table \ref{table_error}, we conclude that:
\begin{itemize}
    \item [1.] EPCA obtains impressive results for different $c$ on almost all datasets as we expect. 
                PCA-AN also obtains good results on several datasets, which 
                indicates the significance of weight learning for PCA. 
    \item [2.] PCA-OM acts as a strong baseline and usually has less errors 
    than other algorithms since it applies the optimal mean and $\ell_{2,1}$-norm. 
    However, EPCA frequently obtains better performance due to utilization 
    of robust loss and augmentation to weights of better-fitting samples. 
    For instance, the error given by EPCA is $10^6$ less than the error 
    of PCA-OM on JAFFE when $c = 50$.
    \item [3.] L$p$PCA performs better than L1PCA owing to its rotational 
    invariance and appropriate $p$ but poorer than EPCA owing to the lack 
    of weight learning and non-optimal mean using.
\end{itemize}

\subsection{Clustering Accuracy}
To show the effectiveness of EPCA, the clustering accuracy (ACC) is tested on 
reconstructed data. 
Table \ref{table_acc} reports the clustering results on all datasets. 
Note that we perform $k$-means 100 times and record the mean on each dataset. 
Since the datasets are occluded severely, the superiority of EPCA is shown 
impressively. It should be pointed out that PCA-AN (our conference work) 
performs well on reconstruction errors but fails on the clustering task. 
The main reason may be that the sparse weight learning leads to the loss of 
structure information.

\begin{figure}
    \centering
    \subfigure[YALE] {
        \includegraphics[width=0.46\linewidth]{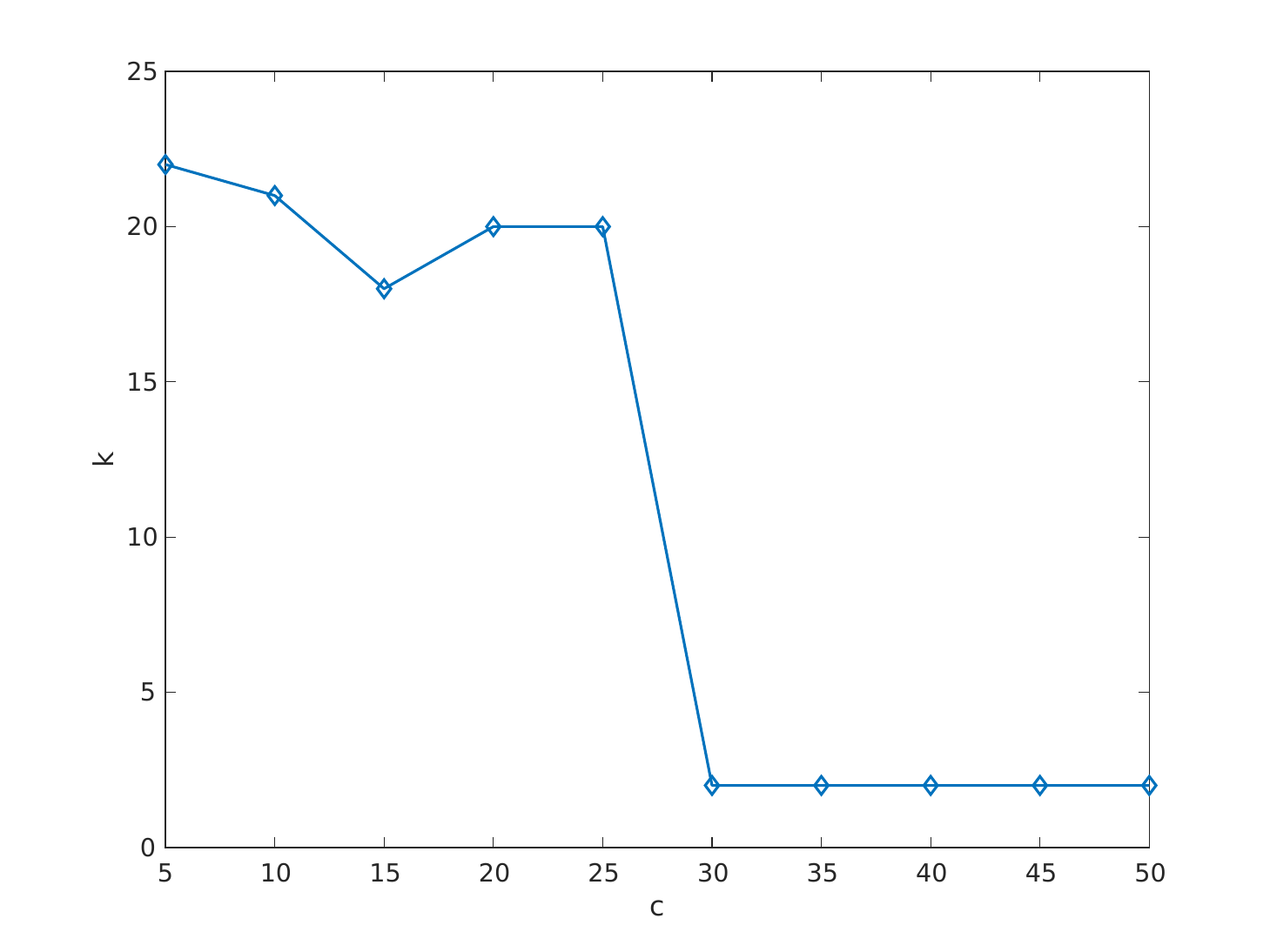}
    }
    \subfigure[ORL] {
        \includegraphics[width=0.46\linewidth]{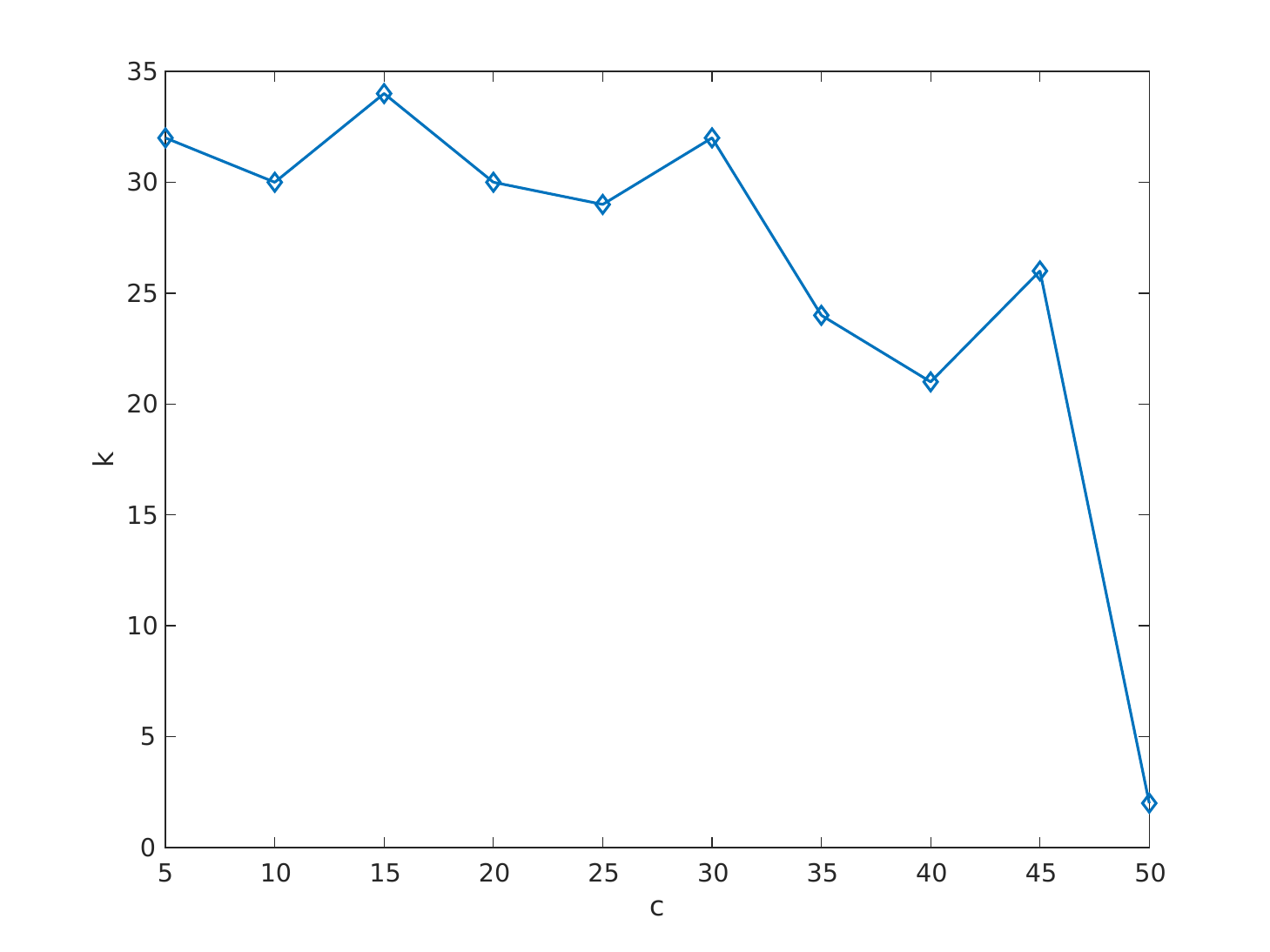}
    }
    \caption{The relationship between the number of activated samples $k$ and the rank of reconstructed matrix $c$.}
    \label{figure_k}
\end{figure}

\subsection{Parameter Sensitivity and Activated Samples}
In our experiments, the parameter $\sigma$ is determined by the following strategy: 
(1) search in a wide enough range $[2^{-20}, 2^{-19}, \cdots, 2^{20}]$; 
(2) fine tune in a narrow scope. 
Fig. \ref{algorithm_adaptive} illustrates the influence of different 
$\sigma$ when $c = 10$. 
To show the impact of $\sigma$ more intuitively, we only show the coarse 
search instead of fine tune. 
Note that the $x$-axis represents $\gamma$ instead of $\sigma$ where 
$\sigma = 2^{\gamma}$. 
From Fig. \ref{figure_sigma}, we can conclude that even when the optimal 
$\sigma$ is tiny, \textit{i.e.}, equivalent to $\ell_{2,1}$-norm, EPCA 
performs better than PCA-OM thanks to the weights $\{\alpha_i\}_{i=1}^n$. 
Additionally, Fig. \ref{figure_k} illustrates the relationship between 
the number of activated samples (denoted by $k$) and the rank of 
reconstructed matrix (denoted by $c$).

\section{Conclusion}
In this paper, based on our previous work, we first propose a general 
collaborative-robust framework of weight learning which integrates 
adaptive weight learning and robust functions 
utilization via a non-trivial approach. 
Furthermore, only $k$ samples are activated, \textit{i.e.}, the 
corresponding weights are non-zero. 
The activated samples are augmented by weight learning while the effects 
of inactivated ones are alleviated by the employed robust loss. 
Then an enhanced principal principal component analysis (EPCA) which 
extends from the proposed framework and applies the adaptive loss function 
is proposed. Moreover, the EPCA is rotational invariant. 
Extensive experiments report the superiority of our model from two aspects 
including reconstruction errors and clustering accuracy. In particular, 
the experiments on clustering exhibit the improvement of EPCA compared with 
our conference paper.

\bibliographystyle{IEEEbib.bst}
\bibliography{pca}

\end{document}